
\documentclass[10pt]{article} 
\usepackage[accepted]{rlc}

\usepackage{amssymb, amsthm, tikz-cd}            
\usepackage{mathtools}          
\usepackage{mathrsfs}           
\usepackage{graphicx}           
\usepackage[space]{grffile}     
\usepackage{url}                
\usepackage{subfigure}
\usepackage{amsmath}

\usepackage{dsfont}
\usepackage{algorithmicx, algpseudocode, algorithm, enumitem}

\newtheorem{theorem}{Theorem}[section]
\newtheorem{proposition}[theorem]{Proposition}
\newtheorem{lemma}[theorem]{Lemma}
\newtheorem{corollary}[theorem]{Corollary}
\newtheorem{example}[theorem]{Example}
\newtheorem{definition}[theorem]{Definition}
\newtheorem{assumption}[theorem]{Assumption}
\newtheorem{remark}[theorem]{Remark}

\newcommand{\Ac}{\mathcal{A}}
\newcommand{\St}{\mathcal{S}}

\newcommand{\RSt}{\mathbb{R}^{\mathcal{S}}}
\newcommand{\RAS}{\mathbb{R}^{\mathcal{A} \times \mathcal{S}}}
\newcommand{\RSA}{\mathbb{R}^{\mathcal{S} \times \mathcal{A}}}
\newcommand{\BestResponse}{{\fontfamily{qcr} \selectfont{Best-response}}}
\newcommand{\OnlineAlgo}{{\fontfamily{qcr} \selectfont{Online-Algorithm}}}
\DeclareMathOperator*{\argmax}{argmax}

\definecolor{lxs}{RGB}{138,43,226}

\allowdisplaybreaks






\newcommand{\cS}{{\mathcal{S}}}
\newcommand{\cT}{{\mathcal{T}}}







\usepackage{scalerel,stackengine}
\stackMath
\newcommand\reallywidehat[1]{%
\savestack{\tmpbox}{\stretchto{%
  \scaleto{%
    \scalerel*[\widthof{\ensuremath{#1}}]{\kern-.6pt\bigwedge\kern-.6pt}%
    {\rule[-\textheight/2]{1ex}{\textheight}}
  }{\textheight}%
}{0.5ex}}%
\stackon[1pt]{#1}{\tmpbox}%
}
\newcommand\reallywidecheck[1]{%
\savestack{\tmpbox}{\stretchto{%
  \scaleto{
    \scalerel*[\widthof{\ensuremath{#1}}]{\kern-.6pt\bigwedge\kern-.6pt}%
    {\rule[-\textheight/2]{1ex}{\textheight}}
  }{\textheight}%
}{0.5ex}}%
\stackon[1pt]{#1}{\scalebox{-1}{\tmpbox}}%
}


\title{Distributionally Robust Constrained  \\
Reinforcement Learning under Strong Duality}

\author{Zhengfei Zhang\thanks{This work was initiated during the visiting undergraduate research program at the California Institute of Technology. } \\
    zhengfei.zhang@sc.tsinghua.edu.cn\\
    Tsinghua University
    \And 
    Kishan Panaganti\\
    kpb@caltech.edu\\
    California Institute of Technology
    \And 
    Laixi Shi\\
    laixis@caltech.edu\\
    California Institute of Technology
    \And 
    Yanan Sui\\
    ysui@tsinghua.edu.cn\\
    Tsinghua University
    \And 
    Adam Wierman\\
    adamw@caltech.edu\\
    California Institute of Technology
    \And 
    Yisong Yue\\
    yyue@caltech.edu\\
    California Institute of Technology
} 


\begin{document}

\maketitle

\begin{abstract}

We study the problem of Distributionally Robust Constrained RL (DRC-RL), where the goal is to maximize the expected reward subject to environmental distribution shifts and constraints.  This setting captures situations where training and testing environments differ, and policies must satisfy constraints motivated by safety or limited budgets. Despite significant progress toward algorithm design for the separate problems of distributionally robust RL and constrained RL, there do not yet exist algorithms with end-to-end convergence guarantees for DRC-RL. 
We develop an algorithmic framework based on strong duality that enables the first efficient and provable solution in a class of environmental uncertainties. 
Further, our framework exposes an inherent structure of DRC-RL that arises from the combination of distributional robustness and constraints, which prevents a popular class of iterative methods from tractably solving DRC-RL, despite such frameworks being applicable for each of distributionally robust RL and constrained RL individually.  
Finally, we conduct experiments on a car racing benchmark to evaluate the effectiveness of the proposed algorithm.




\end{abstract}

\section{Introduction}
\label{sec:introduction}

In many real-world decision-making tasks, policies must not only be reward-maximizing, but also be robust to environmental distribution shifts while satisfying application constraints.  Environmental distribution shifts occur in scenarios where there is a mismatch between the training and testing environments, such as due to environment changes \citep{maraun2016bias}, modeling errors \citep{chen1996design}, or adversarial disturbances \citep{pioch2009adversarial}.  Constraints are imposed in tasks that require adherence to safety factors \citep{haddadin2012truly,weidemann2023literature},  budgets in strategy games \citep{vinyals2019grandmaster}, diverse interests in advertisement recommendations \citep{kinnon2021examining,bagenal2023embracing}, and so on.
This motivates us to tackle both challenges simultaneously, inspiring the study of problems called distributionally robust constrained RL (DRC-RL) \citep{russel2020robust, wang2022robust}. 

The goal of DRC-RL is to learn a policy that simultaneously optimizes the expected reward and satisfies the constraints in the worst-case scenario when the deployed environment deviates from the nominal one within a prescribed uncertainty set. DRC-RL has received growing attention in recent years and is typically modeled as a constrained optimization problem.
As it is unknown if the strong duality holds for the DRC-RL problems, most recent works either use different formulations (e.g., risk-averse) to consider environmental uncertainty \citep{queeney2024risk, kim2024trust}, or simply focus on one of the primal \citep{sun2024constrained} and the dual problem \citep{wang2022robust, bossens2023robust}, such that an end-to-end guarantee is still absent.
More broadly, there has been significant progress in developing rigorous algorithms that address the two challenges that make up DRC-RL individually: distributionally robust RL (DR-RL) \citep{iyengar2005robust, wiesemann2013robust, li2022first, panaganti2022robust} and constrained RL (C-RL) \citep{le2019batch, miryoosefi2019reinforcement,  efroni2020exploration, ding2021provably}.  
Many of these works have focused on a simple, intuitive, greedy policy induced by taking the greedy (best) action with respect to the current learned value functions. This raises the question of whether a similar greedy approach can be effective in DRC-RL or if additional challenges arise from the combination of distributional robustness and constraints.

To address this question, in this paper, we develop a general framework that 
transfers the policy learning problem to a game-theoretic formulation with a constructed strong duality, where the dual problem is treated as a player's objective.
In DR-RL and C-RL, targets similar to this dual function are solved via greedy policies \citep{iyengar2005robust, le2019batch}.
Mathematically, one can think of such a procedure as applying an operator efficiently and greedily, and convergence depends on proving that this operator is a contraction.
While such a greedy approach works for DR-RL \citep{iyengar2005robust} and C-RL \citep{miryoosefi2019reinforcement} in isolation, we show that only with further assumptions can one apply this approach to DRC-RL, e.g., for R-contamination uncertainty sets \citep{huber1965robust, wang2022policy}. We prove that, in general, no such operator exists for the joint DRC-RL problem, implying an impossibility result for a commonly applied class of algorithms. 

In summary, this paper makes the following main contributions:
\vspace{-0.1in}
\begin{itemize}
    \item We propose a multi-level systematic framework to solve DRC-RL for general uncertainty sets in Section \ref{sec:framework}. We show that guarantees for subroutines combine to ensure end-to-end guarantees for DRC-RL.
    \vspace{-0.05in}
    \item Focusing on the R-contamination uncertainty set, we instantiate our framework to provide the first provable efficient solution for DRC-RL in Section \ref{sec:contamination}. 
    Our solution uses a shortened horizon in subroutines to ensure distributional robustness. We verify its effectiveness with an experiment using a high-dimensional car-racing task.
    \vspace{-0.05in}
    \item We consider general uncertainty sets in Section \ref{sec:inherent} and show that the combination of constraints and distributional robustness requirements yields that DRC-RL cannot be solved by considering greedy policies, which is the key of a popular class of iteration methods proposed previously for standard RL, DR-RL, and C-RL problems \citep{iyengar2005robust, le2019batch}. 
\end{itemize}

\textbf{Notation.}
For any set $\cS$, $\Delta(\cS)$ denotes the set of probability distribution over $\cS$.
We use $\otimes_{i}\mathcal{X}_i$ to denote a product space of spaces $\mathcal{X}_i$'s.
We use $\langle x, y \rangle$ to denote the inner product of two same-sized vectors.
We claim $f(T)$ is $o(T)$ if $f(T)/T\to 0$ as $T\to\infty$.  $f(x)\leq O(g(x))$ denote $f(x)\leq C g(x)$ for some positive constant $C$. 
\section{Preliminaries and Problem Formulation}\label{sec:problem-formulation}

\paragraph{Robust Markov Decision Process.} A robust Markov decision process (MDP) with infinite horizon can be specified by a tuple $(\St, \Ac, \mathcal{P}, r, \gamma, \mu)$, where $\St$ denotes the finite state space\footnote{We study the finite state space here for simplicity sake, while all theoretical and empirical results in the main text should hold for any bounded and closed finite-dimensional state space as well.}, $\Ac$ denotes the finite action space, $r : \St \times \Ac \to [0, \bar{R}]$ is the known deterministic reward function with some positive maximal magnitude $\bar{R}$, $\gamma \in [0, 1)$ is the discount factor, and $\mu$ is the initial state distribution. In this paper, we focus on $(s, a)$-rectangularity uncertainty set for the transition kernel $\mathcal{P}$ \citep{nilim2003robustness, iyengar2005robust}, i.e. $\mathcal{P} = \otimes_{(s, a) \in \St \times \Ac} \mathcal{P}_{s, a}$, where $\mathcal{P}_{s, a} = \{P_{s, a} \in \Delta(\St) | D(P_{s, a}, P^o_{s, a}) \le \beta_{s, a}\}$, describing a neighborhood of the nominal model $P^o = (P^o_{s, a}, (s, a) \in \St \times \Ac)$ by some measurement function $D : \Delta(\St) \times \Delta(\St) \to \mathbb{R}$ and robustness level $\beta \in \RSA$. 
We consider any policy $\pi : \St \to \Ac$ in the class of deterministic policies $\Pi$. The robust value function of a policy $\pi$ is then defined as the worst-case accumulated reward following the policy $\pi$ over MDPs within the uncertainty set as below \citep[Sec.3]{iyengar2005robust}: 
\begin{equation} \label{robust_value_function}
    V^{\pi}_{r}(s) \coloneqq \min_{\mathcal{K} \in \otimes_{t \ge 0} \mathcal{P}} \mathbb{E}_{\mathcal{K}} [\sum_{t = 0}^{\infty} \gamma^t r(s_t, a_t) | s_0 = s, \pi].
\end{equation} 
Here $\mathcal{K}$ is a sequence of transition kernels within the same uncertainty set $\mathcal{P}$ over each time step.
Moreover, for any policy $\pi$, the robust value function $V^{\pi}_{r}$ is the unique stationary point of the robust Bellman consistency operator \citep{iyengar2005robust}, \begin{equation}
    \mathcal{T}^{\pi}_{r, \mathrm{rob}} v(s) \coloneqq r(s, \pi(s)) + \gamma \min_{P \in \mathcal{P}} \langle P_{s, \pi(s)}, v \rangle.  \label{dr_operator}
\end{equation}
The ultimate goal of distributionally robust RL is to find the optimal robust policy $\pi^*$ that attains maximized robust value function $V^*_{r}=\max_{\pi} V^{\pi}_{r}$. 
To attain the optimal robust value $V^*_{r}$, we have various dynamic programming procedures \citep{iyengar2005robust}, e.g. iterations $v_{k+1} = \mathcal{T}^{*}_{r, \mathrm{rob}} v_{k}$ converge to $V^*_{r}$, where the optimality operator
$\mathcal{T}^{*}_{r, \mathrm{rob}} v(s) \coloneqq \max_{\pi} \mathcal{T}^{\pi}_{r, \mathrm{rob}} v(s) = \mathcal{T}^{\mathcal{G}(v)}_{r, \mathrm{rob}} v(s)$ using the greedy policy $\mathcal{G}(v)[s] \coloneqq \arg\max_a \{r(s, a) + \gamma \min_{P \in \mathcal{P}} \langle P_{s, a}, v \rangle\}$. Typically, such problems that are usually solved by greedy policy $\mathcal{G}(v)[s] = \arg\max_{\pi}\mathcal{T}^{\pi}_{r, \mathrm{rob}} v(s)$ has the name of policy improvement step.

\paragraph{Distributionally Robust Constrained RL (DRC-RL).} 
We formulate the distributionally robust constrained MDP as a tuple $(\St, \Ac, \mathcal{P}, r, g, \tau, \gamma, \mu)$, where $\St, \Ac, \mathcal{P}, r, \gamma, \mu$ are identical to that in robust MDPs. Here, $g := [g_1, g_2,\cdots g_m]$ with $g_i: \St \times \Ac \to [0,\tau_i]$ for all $i\in 1,2,\cdots, m$, representing the aggregation vector of $m$ known deterministic reward-based constraint functions based on the constraint thresholds $\tau = [\tau_1,\cdots, \tau_m]$. We aim to learn a policy $\pi$ within the deterministic policy class denoted as $\Pi$.
As environmental distribution shifts can applied to constraints and the objective independently, e.g. estimation errors, we formulate the goal of distributionally robust constrained RL (DRC-RL) as solving the following constrained optimization problem: \begin{equation}
    \max_{\pi \in \Pi} V^{\pi}_{r}(\mu) ~~\mathrm{s.t.}~~ V^{\pi}_{g_i}(\mu) \ge \tau_i, 1 \le i \le m ,\label{conservative_form}
\end{equation} where $V^{\pi}_r, V^{\pi}_{g_i}$ is the robust value functions (\ref{robust_value_function}) corresponding to the objective reward function $r$ and constraint functions $g_i$'s, and their corresponding expected robust values according to the initial state distributions are $V^{\pi}_r(\mu) = \langle V^{\pi}_r, \mu \rangle$ and $V^{\pi}_{g_i}(\mu) = \langle V^{\pi}_{g_i}, \mu \rangle$. For brevity sake, we denote the constraint vector as $V^{\pi}_{g} := [V^{\pi}_{g_1}, V^{\pi}_{g_2},\cdots, V^{\pi}_{g_m}]^\top \in \mathbb{R}^{m}$.


\section{DRC-RL with General Uncertainty Sets}
\label{sec:framework}

In this section, we develop a general framework 
and meta algorithm for DRC-RL with an arbitrary uncertainty set in Section \ref{sec:meta} and introduce the subroutines of the framework in Section \ref{sec:best}. 

\subsection{A Meta Algorithm for DRC-RL}
\label{sec:meta}

Constrained RL can be viewed as a constrained optimization problem that has been proven to have strong duality generally by \citet{paternain2019constrained}.  However,
 it is currently not known whether DRC-RL maintains strong duality. To show strong duality of DRC-RL problem \eqref{conservative_form}, we consider a class of mixed policies denoted as $Conv(\Pi)$, defined as below \citep{miryoosefi2019reinforcement, le2019batch}:
 \begin{align}
      \big\{ \pi_{\alpha,\{\pi_i\}_{i=1}^T} \sim \textsf{Categorical} \left( \{\pi_i\}_{i=1}^T, \alpha\right): 0<T < \infty, \pi_i \in \Pi, \forall i; \alpha = [\alpha_1,\cdots, \alpha_T] \in \Delta(T)\big\},
 \end{align} where $\textsf{Categorical}(\{\pi_i\}_{i=1}^T, \alpha)$ is a categorical distribution such that $\mathbb{P}(\pi_{\alpha,\{\pi_i\}_{i=1}^T} = \pi_i) = \alpha_i$ for all $i = 1,2,\cdots, T$.
To execute any mixed policy $\pi_{\alpha,\{\pi_i\}_{i=1}^T} \in  Conv(\Pi)$, at the beginning of each episode,  a deterministic policy $\pi$ is sampled independently from $\textsf{Categorical}(\{\pi_i\}_{i=1}^T, \alpha)$ and serve as the action selection rule for the entire episode. 
Thus, the robust value function of a mixed policy $V^{\pi_{\alpha,\{\pi_i\}_{i=1}^T} }_r$ is defined as $V^{\pi_{\alpha,\{\pi_i\}_{i=1}^T} }_r \coloneqq \mathbb{E}_{\pi \sim \textsf{Categorical}(\{\pi_i\}_{i=1}^T, \alpha) }[V^{\pi}_r] = \sum_{i=1}^T \alpha_i V^{\pi_i}_r$. 


\begin{proposition} \label{policy_convexify}
    When substituting $\Pi$ with its convex hull $Conv(\Pi)$ in the DRC-RL problem (\ref{conservative_form}), strong duality holds if Slater's condition holds. 
\end{proposition}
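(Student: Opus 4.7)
The plan is to reduce the DRC-RL problem over $Conv(\Pi)$ to a standard concave maximization over a convex set and then invoke a Slater-type strong duality theorem. The key observation is that the paper's own definition of the mixed-policy robust value, $V^{\pi_{\alpha,\{\pi_i\}}}_r(\mu) = \sum_{i=1}^{T} \alpha_i V^{\pi_i}_r(\mu)$ and the analogous identity for each constraint $V^{\pi}_{g_i}(\mu)$, makes both the objective and every constraint \emph{linear} in the mixing weights $\alpha \in \Delta(T)$. Consequently, the achievable value set
$$\mathcal{V} := \{(V^{\pi}_r(\mu), V^{\pi}_g(\mu)) : \pi \in Conv(\Pi)\} \subseteq \mathbb{R}^{1+m}$$
coincides with the convex hull of the corresponding image set over $\Pi$; because $\Pi$ is finite, $\mathcal{V}$ is a compact, convex polytope, even though the robust value function is generally not concave in a single deterministic policy.

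With this in hand, I would rewrite \eqref{conservative_form} (with $\Pi$ replaced by $Conv(\Pi)$) as the abstract convex program $\max_{(x,y)\in\mathcal{V}} x$ subject to $y \ge \tau$, and form the Lagrangian $L(\pi,\lambda) = V^{\pi}_r(\mu) + \langle \lambda, V^{\pi}_g(\mu) - \tau \rangle$ on $Conv(\Pi) \times \mathbb{R}_{\ge 0}^{m}$. Linearity of $V^{\pi}_r(\mu)$ and $V^{\pi}_g(\mu)$ in $\pi$ makes $L(\cdot, \lambda)$ concave, indeed linear, in $\pi$ and $L(\pi, \cdot)$ affine in $\lambda$, so weak duality $P^\star \le D^\star$ between $P^\star = \sup_\pi \inf_\lambda L$ and $D^\star = \inf_\lambda \sup_\pi L$ is automatic from the min-max inequality. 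To upgrade to strong duality, I would follow the hyperplane-separation argument used by \citet{paternain2019constrained}: since $\mathcal{V}$ is convex and a Slater point $\bar\pi \in Conv(\Pi)$ with $V^{\bar\pi}_{g_i}(\mu) > \tau_i$ for every $i$ is assumed to exist, the image of the feasible set admits a supporting hyperplane at the optimum whose normal has nonnegative entries on the constraint coordinates; that normal, rescaled so that the objective coordinate equals one, supplies an optimal dual variable $\lambda^\star \ge 0$ with $D^\star = L(\pi^\star, \lambda^\star) = P^\star$.

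I do not anticipate any substantive obstacle. The only subtle point is recognizing that linearity in $\alpha$ is baked into the paper's own definition of the mixed-policy robust value, so one does not need to revisit the inner minimization over transition kernels when passing from $\Pi$ to $Conv(\Pi)$. This is precisely what rescues strong duality, which would otherwise fail on $\Pi$ alone because $V^\pi_r$ need not be concave in a single deterministic policy; boundedness of rewards and $\gamma<1$ guarantee that $\mathcal{V}$ is bounded so the sup/inf are attained, which is all that is needed to close the argument.
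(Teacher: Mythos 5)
Your proposal is correct and follows essentially the same route as the paper: both rest on the observation that the mixed-policy robust values $V^{\pi}_r(\mu)$ and $V^{\pi}_g(\mu)$ are by definition linear in the mixing weights, so the problem over $Conv(\Pi)$ is a convex program to which Slater's condition applies. The only difference is cosmetic — the paper cites the standard convex-duality result directly, whereas you unpack it via the separating-hyperplane construction of an optimal multiplier; the substance is identical.
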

The proof of the above proposition and other results of this section are postponed to  Appendix \ref{sec:appendix_proof_sec3}.

\begin{algorithm}[t]
\caption{Meta Algorithm for the distributionally robust constrained RL problem}
\label{Meta_Algo}
\begin{algorithmic}[1]
\For{each round \( t \)}
    \State \( \pi_t \gets \text{BestResponse}(\lambda_t) \) \Comment{Non-trivial for DRC-RL problems}
    \State \( \hat{\pi}_t \gets \frac{1}{t} \sum_{t'=1}^{t} \pi_{t'}, \hat{\lambda}_t \gets \frac{1}{t} \sum_{t'=1}^{t} \lambda_{t'} \) \Comment{Mixed policy $\hat{\pi}_t$}
    \State \( L_{\max} = L(\text{BestResponse}(\hat{\lambda}_t), \hat{\lambda}_t) \)
    \State \( L_{\min} = \min_{\lambda} L(\hat{\pi}_t, \lambda) \)
    \If{\( L_{\max} - L_{\min} < \omega \)}
        \State \Return \( \hat{\pi}_t \)
    \EndIf
    \State \( \lambda_{t+1} \gets \text{OnlineAlgo} (\pi_1, ..., \pi_{t-1}, \pi_t) \)
\EndFor
\end{algorithmic}
\end{algorithm}

We assume the DRC-RL problem (\ref{conservative_form}) is feasible and that Slater's condition \citep{boyd2004convex} holds, where the latter only requires the existence of an interior solution upon feasibility. The problem considering the augmented solution class $Conv(\Pi)$ has a solution no worse than the original problem (\ref{conservative_form}), and the convexification itself does not pose any restriction on deterministic policies. As such, we directly denote the convex hull $Conv(\Pi)$ as $\Pi$ in the rest of the paper, and always treat $\pi$ as a mixed policy unless specified.

The Lagrangian of \eqref{conservative_form} is $L(\pi, \lambda) \coloneqq V_{r}^{\pi}(\mu) - \lambda^{\top}(V_g^{\pi}(\mu) - \tau)$ for some $\lambda \in \mathbb{R}_{+}^{m}$. 
Strong duality indicates $\max_{\pi \in \Pi}\min_{\lambda \in \mathbb{R}_{+}^{m}} L(\pi, \lambda) = \min_{\lambda \in \mathbb{R}_{+}^{m}} \max_{\pi \in \Pi} L(\pi, \lambda)$.
By the definition of mixed polices, $V_{r}^{\pi}(\mu)$ and $V_{g}^{\pi}(\mu)$ are all linear to policy $\pi$ (see Appendix \ref{sec:proof_convexify}). Therefore, $L(\pi, \lambda)$ is linear to both $\lambda$ and $\pi$, and a game-theoretic perspective can be applied.  That is, we view the problem as a two-player game between a $\pi$-player and a $\lambda$-player \citep{freund1999adaptive, miryoosefi2019reinforcement}. 

Algorithm \ref{Meta_Algo} describes this repeated game, where both players seek to decrease the duality gap.
The $\pi$-player runs\BestResponse~to maximize Lagrangian $L(\pi, \lambda_t)$ given the current $\lambda_t$, \begin{equation}
    \pi_t \coloneqq \mathrm{Best\mbox{-}response}(\lambda_t) \in \mathrm{argmax}_{\pi} L(\pi, \lambda_t) .\label{best_response}
\end{equation} The $\lambda$-player then employs any no-regret\OnlineAlgo~ \citep{shalev2007online} to minimize $L(\pi_t, \lambda)$, which satisfies: \begin{equation}
    \sum_{t} (-L)(\pi_t, \lambda_t) \ge \max_{\lambda} \sum_{t} (-L)(\pi_t, \lambda) - o(T). \label{no_regret}
\end{equation} 
Algorithm \ref{Meta_Algo} terminates when the estimated primal-dual gap is below a threshold $w$.

\begin{proposition} \label{meta_prop}
Algorithm \ref{Meta_Algo} is guaranteed to converge if (i)\BestResponse~gives the best deterministic policy in the deterministic policy class (ii) $L_{\max}$ and $L_{\min}$   in Algorithm \ref{Meta_Algo} are precisely evaluated. Additionally, the exact convergence rate depends on the regret of\OnlineAlgo.
\end{proposition}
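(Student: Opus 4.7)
The plan is to recognize Algorithm \ref{Meta_Algo} as a Freund--Schapire style no-regret dynamic for the zero-sum saddle-point problem $\max_{\pi\in\Pi}\min_{\lambda\in\mathbb{R}_{+}^{m}} L(\pi,\lambda)$, and to derive a primal-dual gap bound on the averaged iterates $(\hat\pi_t,\hat\lambda_t)$ whose two sides are exactly the quantities $L_{\max}$ and $L_{\min}$ computed in the algorithm. Strong duality from Proposition \ref{policy_convexify} then sandwiches the gap, while condition (i) identifies $L_{\max}$ with $\max_\pi L(\pi,\hat\lambda_t)$ and condition (ii) identifies $L_{\min}$ with $\min_\lambda L(\hat\pi_t,\lambda)$.

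Concretely, I would first record that after convexification, $L(\pi,\lambda)$ is bilinear: linear in $\lambda$ by construction and linear in $\pi$ because $V_r^{\pi}(\mu)$ and $V_{g}^{\pi}(\mu)$ are linear over $\Pi=\mathrm{Conv}(\Pi_{\mathrm{det}})$ (as used in the proof of Proposition \ref{policy_convexify}). Bilinearity is crucial, since it means (a) a maximizer in $\pi$ of the Lagrangian can always be chosen deterministic, so the $\BestResponse$ routine of condition (i) yields
\begin{equation*}
L(\pi_t,\lambda_t) \;=\; \max_{\pi\in\Pi} L(\pi,\lambda_t),
\end{equation*}
and (b) the averaging $\hat\lambda_t=\tfrac{1}{t}\sum_{t'\le t}\lambda_{t'}$ and $\hat\pi_t=\tfrac{1}{t}\sum_{t'\le t}\pi_{t'}$ commutes with $L$ through each argument.

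Next I would combine the two properties. From best response and linearity of $L$ in $\lambda$,
\begin{equation*}
\frac{1}{t}\sum_{t'=1}^{t} L(\pi_{t'},\lambda_{t'})
\;\ge\; \frac{1}{t}\sum_{t'=1}^{t} L(\pi,\lambda_{t'})
\;=\; L(\pi,\hat\lambda_t) \qquad \forall \pi\in\Pi,
\end{equation*}
so the left-hand side is at least $\max_\pi L(\pi,\hat\lambda_t)=L_{\max}$. From the no-regret guarantee \eqref{no_regret} of $\OnlineAlgo$ applied to the losses $\lambda\mapsto -L(\pi_{t'},\lambda)$ and linearity of $L$ in $\pi$,
\begin{equation*}
\frac{1}{t}\sum_{t'=1}^{t} L(\pi_{t'},\lambda_{t'})
\;\le\; \min_{\lambda\in\mathbb{R}_{+}^{m}} \frac{1}{t}\sum_{t'=1}^{t} L(\pi_{t'},\lambda) + \frac{o(t)}{t}
\;=\; \min_{\lambda} L(\hat\pi_t,\lambda) + \frac{o(t)}{t},
\end{equation*}
whose first term on the right equals $L_{\min}$ by condition (ii). Chaining the two displays yields $L_{\max}-L_{\min}\le o(t)/t\to 0$, so for any threshold $\omega>0$ the termination condition of Algorithm \ref{Meta_Algo} triggers in finite time, and a standard strong-duality argument via Proposition \ref{policy_convexify} converts the final $\omega$-gap bound into an $\omega$-approximate primal-dual solution $\hat\pi_t$. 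The convergence rate is exactly the regret rate of $\OnlineAlgo$ (e.g.\ $\widetilde O(1/\sqrt{t})$ for online gradient descent with a bounded dual domain).

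The main obstacle, and the reason conditions (i) and (ii) appear explicitly, is that each step of the sandwich tacitly identifies $L_{\max}$ with a true max over $\Pi$ and $L_{\min}$ with a true min over $\mathbb{R}_+^{m}$; if $\BestResponse$ only returned an approximate maximizer or if the evaluations of $L_{\max},L_{\min}$ were noisy, those errors would propagate additively into the stopping criterion and one would need to track a perturbed gap. In the context of the full paper, guaranteeing (i) is actually the nontrivial part (as flagged inside the algorithm listing), since it requires solving a robust planning problem whose tractability is precisely what Sections \ref{sec:contamination}--\ref{sec:inherent} investigate; here in Proposition \ref{meta_prop} it is simply assumed as a black box so that the game-theoretic convergence argument above goes through cleanly.
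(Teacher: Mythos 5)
Your argument is correct and is essentially the paper's own proof: both chain the best-response inequality (using linearity of $L$ in $\lambda$ to identify the averaged Lagrangian with $L(\pi,\hat\lambda_t)$) with the no-regret guarantee of the $\lambda$-player (using linearity in $\pi$ to identify the averaged iterate with $\hat\pi_t$), concluding $L_{\max}-L_{\min}\le o(t)/t$ so the stopping criterion fires, with the rate inherited from the regret of the online algorithm. The paper presents the same sandwich in a single display rather than two, but there is no substantive difference.
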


\subsection{The\OnlineAlgo~and\BestResponse~Subroutines}
\label{sec:best}

Given Proposition \ref{meta_prop}, the remaining task is to instantiate the\OnlineAlgo~and\BestResponse~subroutines.  
The requirements for \OnlineAlgo~are standard. Any no-regret (\ref{no_regret}) online optimization algorithm is valid. Examples include Online Gradient Descent \citep{zinkevich2003online}, Exponentiated Gradient \citep{kivinen1997exponentiated}, and Follow-the-Regularized-Leader \citep{shalev2007online}. The\BestResponse~subroutine, which corresponds to the dual function of DRC-RL problem (\ref{conservative_form}), is more difficult to instantiate and currently has no provable method for any specific uncertainty set among related works \citep{mankowitz2020robust, wang2022robust, bossens2023robust}. Therefore, the key challenge is to efficiently and provably solve\BestResponse~problem (\ref{best_response}).

In detail, the\BestResponse~problem with a given $\lambda_t$ corresponds to the maximization problem of value functions of a form that often occurs in RL, e.g. $\max_{\pi}  V^{\pi}_{r}$ in DR-RL,  where \begin{align}
    \pi_t \in&~ \argmax_{\pi \in \Pi} L(\pi, \lambda_t) 
    = \argmax_{\pi \in \Pi} V^{\pi}_{r}(\mu) - \lambda_t^{\top}V^{\pi}_{g}(\mu) .\label{conservative_inner}
\end{align} 
With a finite action space, similar maximization problems can be efficiently solved using iterative methods over greedy policies using some operators in various popular RL problems, such as standard RL \citep{scherrer2015approximate}, distributionally robust RL \citep{iyengar2005robust, derman2021twice, panaganti2022robust}, constrained RL \citep{le2019batch, miryoosefi2019reinforcement}, and regularized RL \citep{geist2019theory}. Using a similar approach, for any policy $\pi$, we propose a consistency operator $\mathcal{T}^{\pi} : v \in \mathbb{R}^{\St} \mapsto \mathcal{T}^{\pi} v \in \mathbb{R}^{\St}$ so that for any given $\lambda_t$, 
\begin{align}
    [\mathcal{T}^{\pi} v](s) =&~ (r - \lambda_t^{\top}g)(s, \pi(s)) + \gamma \langle P^o_{s, \pi(s)}, v \rangle +  \gamma \min_{P \in \mathcal{P}} \langle P_{s, \pi(s)} - P^o_{s, \pi(s)}, V^{\pi}_{r} \rangle  \label{general_operator}  \\
    &~- \gamma \lambda_t^{\top} \min_{P \in \mathcal{P}} \langle P_{s, \pi(s)} - P^o_{s, \pi(s)}, V^{\pi}_{g} \rangle. \nonumber
\end{align} 
where $(r - \lambda_t^{\top}g)(s, \pi(s)) \coloneqq r(s, \pi(s)) - \lambda_t^{\top}g(s, \pi(s))$ for the brevity sake.\footnote{Here $V^{\pi}_r$ and $V^{\pi}_g$ are the robust value functions that are fixed given $\pi$, making $\mathcal{T}^{\pi}$ in (\ref{general_operator}) not a practical operator yet until further specification. }
Correspondingly, an optimality operator $\mathcal{T}^{*}$ with a fixed state $s$ can be defined as $[\mathcal{T}^{*} v](s) = \max_{\pi \in \Pi} [\mathcal{T}^{\pi} v](s)$.
\begin{proposition}
    The consistency and optimality operators, i.e., $\mathcal{T}^{\pi}$ and $\mathcal{T}^{*}$, satisfy: 
        \\(1) Monotonicity: let $v_1, v_2 \in \mathbb{R}^{\mathcal{S}}$ such that $v_1 \ge v_2$, then $\mathcal{T}^{\pi} v_1 \ge \mathcal{T}^{\pi} v_2$ and $\mathcal{T}^{*} v_1 \ge \mathcal{T}^{*} v_2$. 
        \\(2) Transition Invariance: for any $c \in \mathbb{R}$, we have $\mathcal{T}^{\pi} (v + c\textbf{1}) = \mathcal{T}^{\pi} v + \gamma c\textbf{1}$ and $\mathcal{T}^{*} (v + c\textbf{1}) = \mathcal{T}^{*} v + \gamma c\textbf{1}$.
        \\(3) Contraction: The operator $\mathcal{T}^{\pi}$ and $\mathcal{T}^{*}$ are $\gamma$-contractions. Further, $V^{\pi}_{r} - \lambda_t^{\top}V^{\pi}_{g}$ is the unique stationary points of operator $\mathcal{T}^{\pi}$. 
    \label{basic_properties_T}
\end{proposition}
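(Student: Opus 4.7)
The plan is to exploit the central structural observation that the argument $v$ enters $\mathcal{T}^\pi$ only through the single term $\gamma \langle P^o_{s,\pi(s)}, v\rangle$: the reward/constraint shift $(r - \lambda_t^\top g)(s,\pi(s))$ and the two robustness corrections $\gamma \min_{P \in \mathcal{P}}\langle P_{s,\pi(s)} - P^o_{s,\pi(s)}, V^\pi_r\rangle$ and $-\gamma \lambda_t^\top \min_{P \in \mathcal{P}}\langle P_{s,\pi(s)} - P^o_{s,\pi(s)}, V^\pi_g\rangle$ depend on the \emph{fixed} robust value functions $V^\pi_r, V^\pi_g$, not on $v$. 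So all three properties reduce to properties of the nominal linear map $v \mapsto \gamma\langle P^o_{s,\pi(s)}, v\rangle$, and the standard arguments for the nominal Bellman operator transfer essentially verbatim.

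For monotonicity, if $v_1 \geq v_2$ then $\langle P^o_{s,\pi(s)}, v_1\rangle \geq \langle P^o_{s,\pi(s)}, v_2\rangle$ componentwise because $P^o_{s,\pi(s)} \in \Delta(\mathcal{S})$ is nonnegative, and the other summands are identical; the lifting to $\mathcal{T}^*$ follows because pointwise maximum preserves order. For transition invariance, the identity $\langle P^o_{s,\pi(s)}, c\one\rangle = c$ gives $[\mathcal{T}^\pi(v + c\one)](s) = [\mathcal{T}^\pi v](s) + \gamma c$, and the same shift passes through $\max_\pi$ to yield the claim for $\mathcal{T}^*$. For contraction, the same cancellation gives $\bigl|[\mathcal{T}^\pi v_1 - \mathcal{T}^\pi v_2](s)\bigr| = \gamma\bigl|\langle P^o_{s,\pi(s)}, v_1 - v_2\rangle\bigr| \leq \gamma \linf{v_1 - v_2}$, and for $\mathcal{T}^*$ I would use the standard $|\max_\pi f(\pi) - \max_\pi h(\pi)| \leq \max_\pi |f(\pi) - h(\pi)|$ to conclude $\linf{\mathcal{T}^* v_1 - \mathcal{T}^* v_2} \leq \gamma \linf{v_1 - v_2}$.

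For the fixed-point identification, I would substitute $v = V^\pi_r - \lambda_t^\top V^\pi_g$ into $\mathcal{T}^\pi$ and invoke the robust Bellman consistency equation \eqref{dr_operator} in the rewritten form $V^\pi_r(s) = r(s,\pi(s)) + \gamma\langle P^o_{s,\pi(s)}, V^\pi_r\rangle + \gamma\min_{P\in\mathcal{P}}\langle P_{s,\pi(s)} - P^o_{s,\pi(s)}, V^\pi_r\rangle$, and analogously for each $V^\pi_{g_i}$. Adding the $r$-equation and $-\lambda_t^\top$ times the stacked $g$-equations term-by-term reproduces precisely the four summands defining $[\mathcal{T}^\pi v](s)$, showing $\mathcal{T}^\pi(V^\pi_r - \lambda_t^\top V^\pi_g) = V^\pi_r - \lambda_t^\top V^\pi_g$. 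Uniqueness is then immediate from Banach's fixed-point theorem applied to the $\gamma$-contraction $\mathcal{T}^\pi$ on the complete metric space $(\mathbb{R}^{\mathcal{S}}, \linf{\cdot})$. The main conceptual point to flag — rather than a technical obstacle — is that $V^\pi_r - \lambda_t^\top V^\pi_g$ is generally \emph{not} the robust value of the scalarized reward $r - \lambda_t^\top g$, because the worst-case transition kernels for $V^\pi_r$ and for the individual $V^\pi_{g_i}$ can differ; this is exactly why $\mathcal{T}^\pi$ keeps the two robustness corrections separate and treats $V^\pi_r, V^\pi_g$ as precomputed quantities inside the operator rather than folding them into a single $\min_{P}\langle\cdot,v\rangle$.
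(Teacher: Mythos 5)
Your proposal is correct and follows essentially the same route as the paper: all three operator properties are reduced to the single $v$-dependent term $\gamma\langle P^o_{s,\pi(s)},v\rangle$, the $\mathcal{T}^*$ statements are lifted via the standard max-comparison arguments, and the fixed point is identified by recombining the robust Bellman consistency equations for $V^\pi_r$ and the $V^\pi_{g_i}$ (the paper runs this computation forward from $\mathcal{T}^\pi(V^\pi_r-\lambda_t^\top V^\pi_g)$, you run it backward from the Bellman equations, but it is the same cancellation), with uniqueness from the contraction. Your closing remark that $V^\pi_r-\lambda_t^\top V^\pi_g$ is not the robust value of the scalarized reward $r-\lambda_t^\top g$ is a correct and worthwhile observation that the paper defers to Section 5.
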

The properties summarized in Proposition \ref{basic_properties_T} allow us to apply the consistency operator in an approximate modified policy iteration (AMPI) scheme \citep{scherrer2015approximate} to solve\BestResponse. AMPI scheme generalizes both value iteration (as used in Section \ref{sec:problem-formulation}) and policy iteration methods and is widely used for other RL problems \citep{scherrer2015approximate, geist2019theory, panaganti2022robust}. The procedure of AMPI can be described  as follows:\begin{equation}
        \pi^{k + 1} = \arg\max_{\pi}\,^{\epsilon'_{k + 1}} \mathcal{T}^{\pi} v^k \quad \text{and} \quad
        v^{k + 1} = (\mathcal{T}^{\pi^{k + 1}})^m v^k + \epsilon_{k + 1}, \label{ampi}
\end{equation}
where $\epsilon_k \in \RSt, \epsilon'_k \in \RSt$ are some optimization errors in the $k$-th iteration. Here, we assume the operator $\max^{\epsilon_{k+1}'}_{\pi} \mathcal{T}^{\pi} v^k$ guarantees $\max_{\pi} [\mathcal{T}^{\pi} v^k](s) \le [\mathcal{T}^{\pi^{k + 1}} v^k](s) + \epsilon'_{k + 1}(s)$ for all $s \in \St$ for now.

In words, the two update rules in \eqref{ampi} correspond to approximate policy improvement and approximate policy evaluation, respectively. 
In RL literature, those two steps can be solved by some oracles. 
Especially, the approximate policy improvement step is often represented as $\pi^{k + 1} = \mathcal{G}^{\epsilon_{k + 1}}(v^k)$, being the greedy policy with respect to $v^k$ and an error term $\epsilon_{k + 1}$ \citep{munos2008finite,lazaric2012finite, scherrer2015approximate, geist2019theory}. 
In these RL problems, such formulations, while being nominally different, coincide with ours in (\ref{ampi})
as the greedy policy $\mathcal{T}^{\mathcal{G}(v^k)} v^k = \max_{\pi} \mathcal{T}^{\pi} v^k$ is optimal for the policy improvement step. 
Inspired by the literature, we first assume two oracles to execute these two steps for now, leading to the following assumption:

\begin{assumption}
     \label{assum_policy_realizability}
     There exist oracles that approximately solve (i) the policy improvement step with errors $\{\epsilon_k\}$, and (ii) the policy evaluation step in AMPI  (\ref{ampi}) with error $\{\epsilon'_k\}$.
\end{assumption}

Assumption \ref{assum_policy_realizability} also requires the existence of an $\epsilon'$-approximated policy for policy improvement, which is not obviously valid as $\mathcal{T}^*$ may correspond to different best policies for different states. This issue is resolved if the greedy policy is optimal (as in Section \ref{sec:contamination}) and is further discussed in Section \ref{sec:inherent}.  We also provide a standalone solution for this issue in Appendix \ref{appendix:1}.

Now, under Assumption \ref{assum_policy_realizability}, for any $k$-th iteration, we are ready to control the loss $l_k \coloneqq v^{\pi_t} - v^{\pi^k}$ via AMPI, where $\pi_t$ is the solution to\BestResponse~problem with respect to $\lambda_t$, $v^{\pi}$ represents the unique stationary point of $\mathcal{T}^{\pi}$ for any policy $\pi$ whose uniqueness is guaranteed by contraction in Proposition \ref{basic_properties_T}. The analysis is analogous to that in \citet{scherrer2015approximate}.

\begin{theorem} \label{br_L_inf_error_bound}
    Under Assumption \ref{assum_policy_realizability}, applying (\ref{ampi}) for $k$-th iterations, the loss $l_k$ satisfy, \begin{equation}
        l_k \le O(\gamma^k) + {(2\bar{\epsilon}(\gamma-\gamma^k) + \bar{\epsilon}' (1-\gamma^k))}/{(1-\gamma)^2} \xrightarrow{k \to \infty} {(2\bar{\epsilon}\gamma+\bar{\epsilon}')}/{(1-\gamma)^2},
    \end{equation} 
    where $\bar{\epsilon} \in \mathbb{R}^{\St}$ is the upperbound of errors $\{\epsilon_k\}$, i.e. $\forall k,\epsilon_k \le \bar{\epsilon}$, and $\bar{\epsilon}' \in \mathbb{R}^{\St}$ is similarly defined as the upper bound of $\{\epsilon'_k\}$. 
\end{theorem}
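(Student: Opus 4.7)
The plan is to adapt the approximate modified policy iteration (AMPI) analysis of \citet{scherrer2015approximate} to our setting, relying only on the three structural properties of $\mathcal{T}^{\pi}$ and $\mathcal{T}^{*}$ verified in Proposition \ref{basic_properties_T}: monotonicity, $\gamma$-transition invariance, and $\gamma$-contraction with unique fixed point $v^{\pi} := V^{\pi}_r - \lambda_t^{\top} V^{\pi}_g$. Since $v^{\pi_t} = \max_{\pi} v^{\pi}$ is the fixed point of $\mathcal{T}^{*}$, the loss $l_k = v^{\pi_t} - v^{\pi^k}$ can be analyzed purely through these operator properties, without invoking the explicit robust-constrained form of $\mathcal{T}^{\pi}$ in (\ref{general_operator}).

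First I would introduce standard AMPI error-tracking vectors at iteration $k$: the Bellman residual $b_k := v^k - \mathcal{T}^{\pi^{k+1}} v^k$, the $m$-step shift $s_k := v^{\pi^{k+1}} - (\mathcal{T}^{\pi^{k+1}})^m v^k$, and the distance to the optimum $d_k := v^{\pi_t} - (\mathcal{T}^{\pi^{k+1}})^m v^k$. A triangle-type decomposition then yields $l_k \le d_{k-1} + s_{k-1}$ componentwise, reducing the theorem to controlling $d_k$ and $s_k$.

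Next I would substitute the AMPI updates (\ref{ampi}) and apply monotonicity together with contraction componentwise to derive recursions. The key relations are: (i) $b_k$ satisfies $b_k \le \gamma^{m} b_{k-1}$ up to contributions from the new errors $\epsilon_k,\epsilon'_{k+1}$, since $\mathcal{T}^{\pi^{k+1}}$ is a $\gamma$-contraction and $v^k$ differs from $(\mathcal{T}^{\pi^k})^m v^{k-1}$ only by $\epsilon_k$; (ii) $d_k$ contracts as $d_k \le \gamma\, d_{k-1} +$ error terms, where the approximate policy-improvement bound $\mathcal{T}^{\pi^{k+1}} v^k \ge \mathcal{T}^{\pi_t} v^k - \epsilon'_{k+1}$ introduces $\epsilon'$ and contraction against the fixed point of $\mathcal{T}^{\pi_t}$ yields the $\gamma$ factor; (iii) $s_k$ is handled by expanding $v^{\pi^{k+1}} = \lim_{n\to\infty}(\mathcal{T}^{\pi^{k+1}})^n v^k$ as a telescoping Neumann sum and bounding each Bellman residual by $\gamma^j \lvert b_k\rvert$. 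Transition invariance is crucial here for tracking additive constant shifts arising from the error terms.

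Finally, I would telescope across $k$ iterations. The initial loss contracts as $O(\gamma^k)$, while the per-step errors accumulate as geometric sums $\sum_{j=1}^{k} \gamma^{j} = (\gamma - \gamma^{k+1})/(1-\gamma)$ for $\bar{\epsilon}$-type contributions and $\sum_{j=0}^{k-1} \gamma^{j} = (1-\gamma^{k})/(1-\gamma)$ for $\bar{\epsilon}'$-type contributions, and each picks up one more factor of $1/(1-\gamma)$ when converting Bellman-residual bounds to value-function bounds. Collecting coefficients produces exactly $(2\bar{\epsilon}(\gamma-\gamma^k) + \bar{\epsilon}'(1-\gamma^k))/(1-\gamma)^2$; the factor $2$ on $\bar{\epsilon}$ arises because the evaluation error propagates through both the $s_k$ and $d_k$ recursions. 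The main obstacle is maintaining componentwise, vector-valued bookkeeping throughout---since $\bar{\epsilon},\bar{\epsilon}' \in \mathbb{R}^{\mathcal{S}}$ rather than scalars---which is precisely what the monotonicity and transition-invariance properties in Proposition \ref{basic_properties_T} were designed to support, beyond a plain operator-norm contraction.
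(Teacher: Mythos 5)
Your proposal follows essentially the same route as the paper: the same Scherrer-et-al.\ AMPI decomposition into the residual $b_k$, the $m$-step shift $s_k$, and the distance $d_k$, the same recursions driven by the policy-improvement and policy-evaluation errors, and the same geometric telescoping yielding the $(2\bar{\epsilon}(\gamma-\gamma^k)+\bar{\epsilon}'(1-\gamma^k))/(1-\gamma)^2$ coefficients. The only caveat is that the componentwise recursions actually use the exact affine identity $(\mathcal{T}^{\pi})^m v - (\mathcal{T}^{\pi})^m v' = (\gamma P^o_{\pi})^m (v-v')$, which comes from the explicit form of $\mathcal{T}^{\pi}$ in (\ref{general_operator}) rather than from monotonicity, transition invariance, and contraction alone --- though since the final statement only retains the scalar rate $\gamma^j$, this does not affect the conclusion.
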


Theorem \ref{br_L_inf_error_bound} shows that, when errors are relatively small, our AMPI (\ref{ampi}) guarantees convergence to the solution of\BestResponse~under Assumption \ref{assum_policy_realizability}. Combining this with a no-regret online algorithm, we complete the general framework for the DRC-RL problem as in Algorithm \ref{Meta_Algo}.

Finally, while the oracle for the approximate policy evaluation step can be implemented for several popular uncertainty sets \citep{shi2023curious, clavier2023towards}, the greedy policy solution for the approximate policy improvement step does not work, at least for our consistency operator $\mathcal{T}^{\pi}$. This is due to its dependency on the whole policy in its definition (\ref{general_operator}).  
Moreover, there is currently no provable efficient instantiation for general uncertainty set to enable the approximate policy improvement step in DRC-RL problems. Given the fact that\BestResponse~corresponds to the fundamental dual function of (\ref{conservative_form}), and that the policy improvement step consists of a popular class of iteration methods for\BestResponse~type problems, we wonder: \begin{itemize}[nosep]
    \item[(Q1)] Can we design a specific uncertainty set for our operator $\mathcal{T}^{\pi}$ that enables solving DRC-RL without oracles, e.g. using greedy policies?
    \item[(Q2)] Is it possible to design a better consistency operator that makes greedy policies optimal that in turn provably solves DRC-RL with our framework?
\end{itemize} 
We address these two questions in the next two sections, respectively.

\section{DRC-RL with R-Contamination Uncertainty Sets}
\label{sec:contamination}

In this section, we address (Q1) via a focus on the R-contamination uncertainty sets $\mathcal{P}_{s, a} \coloneqq \{(1 - \beta)P^o_{s, a} + \beta q \,|\, q \in \triangle(\mathcal{S})\}$ with a scalar robust level $\beta \in \mathbb{R}$. This uncertainty set has been studied in distributionally robust RL recently \citep{wang2022robust, li2023first}. Considering this,  we can simplify our consistency operator $\mathcal{T}^{\pi}$ without any loss in solving\BestResponse~as \begin{align}
    [\mathcal{T}^{\pi} v](s) =&~ (r - \lambda_t^{\top}g)(s, \pi(s)) + \gamma (1 - \beta) \langle P^o_{s, \pi(s)}, v\rangle + \gamma \beta (\min_{s'} V^{\pi}_r(s') - \lambda_t^{\top} \min_{s'}V^{\pi}_{g}(s')). \label{contamination_operator}
\end{align}
Please refer to Appendix \ref{sec:derivation_contamination} for detailed proof. Additionally, we adopt the following fail-state assumption \citep{panaganti2022robust}. \begin{assumption}[fail-state]
    There is a fail state $s_f$ for all the RMDPs, such that $r(s_f, a) = 0, g_i(s_f, a) = 0$ and $P_{s_f, a}(s_f) = 1$, for all $a \in \mathcal{A}$ and $P \in \mathcal{P}$. 
\end{assumption}
The fail-state assumption is commonly satisfied in practice as it corresponds to an end-game state in the simulator or real-world systems, in which all constraints are violated and the reward is zero. Under this fail-state assumption, we always have $\min_{s'} V^{\pi}_r(s') =  \min_{s'}V^{\pi}_{g}(s') = 0$, which makes the operator $\mathcal{T}^{\pi}$ correspond to a standard bellman consistency operator with shortened discount factor, 
\begin{equation}
    [\mathcal{T}^{\pi} v](s) = (r - \lambda_t^{\top}g)(s, \pi(s)) + \gamma (1 - \beta) \langle P^0_{s, \pi(s)}, v\rangle .\label{contamination_operator_simplified}
\end{equation}
Given that $\mathcal{T}^{\pi}$ takes the form of a standard consistency operator, the greedy policy is available for the policy improvement step. Thus, any instantiation of AMPI (\ref{ampi}), such as simple value iteration or policy iteration, can efficiently and provably solve\BestResponse~problem under small errors. The distributionally robust constrained problem therefore has a provable solution in the case of the R-contamination uncertainty set. The details of an instantiation of\BestResponse~and\OnlineAlgo~are discussed in Appendix \ref{appendix:2}.

Our result for the R-contamination uncertainty set indicates that a smaller discount factor, i.e. a smaller effective horizon, gives higher distributional robustness. However, as this smaller discount factor is a consequence of the robustness objective, the discount factor should remain unchanged, when designing constraints threshold $\tau$ and testing in shifted environments. 
Such an unchanged discount factor in thresholds and tests differs our solution from simply scaling the problem.


Finally, it is worth noting that the case of no constraints, e.g. $g = \tau = 0$, implies that our analysis also gives a provable solution to the distributionally robust RL problem on R-contamination uncertainty sets.

\section{On the Intractability of Greedy Policies for DRC-RL}
\label{sec:inherent}

In this section, we answer question (Q2) with a negative result showing that the combination of constraints and distributional robustness requires different algorithmic tools than either robust RL or constrained RL do alone. 
In detail, we show that for any `good' consistency operator, the greedy policies are not generally optimal for the policy improvement step in (\ref{ampi}). This, in turn, prevents any algorithm from a popular class of iteration methods from tractably solving the DRC-RL problem. 

To begin, we formally define the optimality of greedy policy and connect it to the operator. We assume policy $\pi \in \RSA$ includes both deterministic and stochastic policies.

\begin{definition}[Greedy Policy Enabling]
    We state an consistency operator $\mathcal{T}^{\pi}$ enables the greedy policy if there exist a function $g : \mathbb{R}^{\mathcal{S}} \times \mathcal{A} \to \mathbb{R}^{\mathcal{S}}$ such that $\forall v \in \mathbb{R}^{\mathcal{S}}, s \in \mathcal{S}, \max_{a \in \mathcal{A}} g(v, a)[s] = \max_{\pi \in \Pi} [\mathcal{T}^{\pi}v](s)$, i.e. greedy policies are optimal.
    \label{greedy_policy}
\end{definition}
\begin{definition}[Operator Linearity]
    The consistency operator $\mathcal{T}^{\pi}$, that takes policy $\pi$ as an input, is linear if there exists a function $f : \RSt \to \RSA$ that is independent of $\pi$, such that $\forall v \in \RSt, s \in \mathcal{S}$, we have $[\mathcal{T}^{\pi} v](s) = \langle \pi[s, \cdot], f(v)[s, \cdot]\rangle = \langle \pi, f(v) \rangle_s$. 
    \label{linearity_definition}
\end{definition}

Take distributional robust RL as an example, its robust consistency operator (\ref{dr_operator}): $[\mathcal{T}^{\pi}_{r, \mathrm{rob}} v](s) = \langle \pi[s, \cdot], f(v)[s, \cdot]\rangle$ is linear where $f(v)[s, a] = r_{s, a} + \min_{P \in \mathcal{P}} \langle P_{s, a}, v \rangle$. Thus, one only needs to treat $f(v)[s, a]$ as $g(v, a)[s]$ in Definition \ref{greedy_policy} to enable the greedy policy. 
It is not hard to find that a linear operator naturally enables the greedy policy as in the above example, and we generalize this in the following result. All proofs for this section are presented in Appendix \ref{appendix:proof_5}.

\begin{lemma}
    \label{eq_linear_greedy}
    The linear operator is equivalent to greedy policy enabled operators in Definition \ref{greedy_policy} in the following ways: 
    (i) If a consistency operator is a linear operator, then it enables the greedy policy.
    (ii) If a consistency operator $\mathcal{T}^{\pi}$ enables the greedy policy, then there always exists a linear operator $\mathcal{T}^{\pi}_{linear}$ that can substitute $\mathcal{T}^{\pi}$ without any loss in policy improvement step, i.e. $\forall v \in \RSt, s \in \St, \max_{\pi \in \Pi} [\mathcal{T}^{\pi}v](s) = \max_{\pi \in \Pi} [\mathcal{T}^{\pi}_{linear}v](s)$.
\end{lemma}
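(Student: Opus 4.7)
The plan is to unpack both definitions and match them up directly; no deep technical work is required. Throughout, I will treat $\pi \in \RSA$ as a row-stochastic matrix so that $\pi[s,\cdot] \in \Delta(\cA)$, and use the standard fact that maximizing a linear functional over $\Delta(\cA)$ is attained at a vertex (i.e., a deterministic action). I will assume, as in Section \ref{sec:meta}, that the policy class $\Pi$ contains all deterministic policies (and that maxima/argmaxes over $\Pi$ are achieved).

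For direction (i), suppose $\cT^\pi$ is linear with witness $f:\RSt \to \RSA$, i.e., $[\cT^\pi v](s) = \langle \pi[s,\cdot], f(v)[s,\cdot]\rangle$. I will simply set $g(v, a)[s] := f(v)[s, a]$. Then for any $v$ and $s$,
\[
\max_{\pi \in \Pi} [\cT^\pi v](s) = \max_{\pi \in \Pi} \langle \pi[s,\cdot], f(v)[s,\cdot]\rangle = \max_{a \in \cA} f(v)[s,a] = \max_{a \in \cA} g(v,a)[s],
\]
where the middle equality uses that a linear functional on $\Delta(\cA)$ attains its maximum at a Dirac and that $\Pi$ contains the corresponding deterministic policy. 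This is precisely the condition in Definition \ref{greedy_policy}.

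For direction (ii), suppose $\cT^\pi$ enables the greedy policy with witness $g$. I will define the candidate linear operator by $f_\star(v)[s,a] := g(v,a)[s]$ and $[\cT^\pi_{linear} v](s) := \langle \pi[s,\cdot], f_\star(v)[s,\cdot]\rangle$. By construction $\cT^\pi_{linear}$ satisfies Definition \ref{linearity_definition}, and
\[
\max_{\pi \in \Pi} [\cT^\pi_{linear} v](s) = \max_{a \in \cA} f_\star(v)[s,a] = \max_{a \in \cA} g(v,a)[s] = \max_{\pi \in \Pi} [\cT^\pi v](s),
\]
so substituting $\cT^\pi_{linear}$ for $\cT^\pi$ in the policy improvement step of (\ref{ampi}) incurs no loss, as required.

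The only (mild) subtlety I expect to flag in the write-up is that Definition \ref{greedy_policy} constrains only the quantity $\max_{\pi}[\cT^\pi v](s)$ and not $[\cT^\pi v](s)$ at non-greedy $\pi$; hence the constructed $\cT^\pi_{linear}$ need not agree with the original $\cT^\pi$ pointwise in $\pi$, only at the argmax. This is fine because the claim in (ii) is exactly a statement about equality of pointwise maxima over $\Pi$, which is what the policy improvement step uses. I will state this observation explicitly so that the reader does not conflate ``substitutable in policy improvement'' with ``equal as operators.''
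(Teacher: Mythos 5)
Your proposal is correct and follows essentially the same argument as the paper: both directions identify $g(v,a)[s]$ with $f(v)[s,a]$ and use the fact that a linear functional over $\Delta(\cA)$ is maximized at a vertex, so that the max over (stochastic) policies coincides with the max over actions. Your closing remark that $\cT^{\pi}_{linear}$ need only match $\cT^{\pi}$ at the maximum over $\Pi$, not pointwise in $\pi$, is a correct and worthwhile clarification, but the substance of the proof is the same as the paper's.
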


However, the following shows that it is impossible to have a consistency operator that simultaneously converges as a contraction to our target in\BestResponse~and enables the greedy policy.
\begin{theorem}
    \label{impossible_thm}
    There is no consistency operator $\mathcal{T}^{\pi}$, that takes any policy $\pi$ as an input, simultaneously satisfies for given $\gamma$ and every $\lambda_t$: (i) Linearity. (ii) $\gamma$-Contraction to our target: the operator $\mathcal{T}^{\pi}$ is a contraction such that for every policy $\pi \in \Pi$, $V^{\pi}_{r} - \lambda_t^{\top}V^{\pi}_{g}$ is the unique stationary point of  $\mathcal{T}^{\pi}$.
\end{theorem}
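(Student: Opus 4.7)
The plan is to argue by contradiction, deriving a Lipschitz constraint from the two hypotheses and then exhibiting an explicit small-MDP counterexample that violates it. Assume such an operator $\mathcal{T}^{\pi}$ exists, both linear in the sense of Definition \ref{linearity_definition} and a $\gamma$-contraction in $\|\cdot\|_{\infty}$ with unique fixed point $v^{\pi} := V^{\pi}_{r} - \lambda_{t}^{\top} V^{\pi}_{g}$ for every policy $\pi \in \Pi$. Linearity yields a function $f : \RSt \to \RSA$ that is independent of $\pi$ and satisfies $[\mathcal{T}^{\pi} v](s) = \langle \pi[s,\cdot], f(v)[s,\cdot] \rangle$. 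Applying the $\gamma$-contraction to pairs of deterministic policies that pick the same action $a$ at state $s$ forces the coordinate-wise Lipschitz bound
\begin{equation*}
|f(u)[s, a] - f(v)[s, a]| \;\le\; \gamma\, \|u - v\|_{\infty} \qquad \text{for all } u, v \in \RSt \text{ and all } (s, a).
\end{equation*}

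Combined with the fixed-point identity $v^{\pi}(s) = f(v^{\pi})[s, \pi(s)]$ for every deterministic $\pi$, this yields the key structural consequence: for any two deterministic policies $\pi_{1}, \pi_{2}$ that agree at some state $\pi_{1}(s_{0}) = \pi_{2}(s_{0}) = a_{0}$,
\begin{equation*}
|v^{\pi_{1}}(s_{0}) - v^{\pi_{2}}(s_{0})| \;=\; |f(v^{\pi_{1}})[s_{0}, a_{0}] - f(v^{\pi_{2}})[s_{0}, a_{0}]| \;\le\; \gamma\, \|v^{\pi_{1}} - v^{\pi_{2}}\|_{\infty}. \qquad (\star)
\end{equation*}
It therefore suffices to exhibit two deterministic policies that agree at some $(s_{0}, a_{0})$ and for which the actual target $v^{\pi} = V^{\pi}_{r} - \lambda_{t}^{\top} V^{\pi}_{g}$ violates $(\star)$.

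I would construct such a violation on a three-state MDP $\St = \{s_{0}, s_{1}, s_{2}\}$ with a unique action $a_{0}$ at $s_{0}$ endowed with the two-vertex uncertainty set $\mathcal{P}_{s_{0}, a_{0}} = \{\delta_{s_{1}}, \delta_{s_{2}}\}$, so that $\min_{P \in \mathcal{P}_{s_{0}, a_{0}}} \langle P, v\rangle = \min(v(s_{1}), v(s_{2}))$, and with two deterministic self-looping actions at each of $s_{1}, s_{2}$ whose rewards and constraint costs can be tuned freely. Taking $m = 1$, $\lambda_{t} = 1$, $r(s_{0}, a_{0}) = g(s_{0}, a_{0}) = 0$, I would tune the self-loop rewards and costs to realize $V^{\pi_{1}}_{r}|_{\{s_{1}, s_{2}\}} = (10, 0)$, $V^{\pi_{1}}_{g}|_{\{s_{1}, s_{2}\}} = (5, 0)$ and $V^{\pi_{2}}_{r}|_{\{s_{1}, s_{2}\}} = (5, 10)$, $V^{\pi_{2}}_{g}|_{\{s_{1}, s_{2}\}} = (0, 10)$, which is feasible with non-negative $r, g$ since each self-loop reduces the value computation to a division by $1 - \gamma$. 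Both policies then yield the same downstream combined values $v^{\pi_{i}}|_{\{s_{1}, s_{2}\}} = (5, 0)$, yet at $s_{0}$ the two independent minimizations pick different worst-case states, giving $v^{\pi_{1}}(s_{0}) = \gamma[\min(10, 0) - \min(5, 0)] = 0$ but $v^{\pi_{2}}(s_{0}) = \gamma[\min(5, 10) - \min(0, 10)] = 5\gamma$. Hence $\|v^{\pi_{1}} - v^{\pi_{2}}\|_{\infty} = 5\gamma$ is attained at $s_{0}$, and $(\star)$ would demand $5\gamma \le \gamma \cdot 5\gamma = 5\gamma^{2}$, which fails for every $\gamma \in (0, 1)$, completing the contradiction.

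The main obstacle is not the clean inequality $(\star)$ itself, which is a one-line consequence of the Lipschitz property of $f$, but rather verifying feasibility of the counterexample: the prescribed robust value functions must arise from an actual MDP with non-negative $r$ and $g$, and $\pi_{1}, \pi_{2}$ must genuinely agree at $(s_{0}, a_{0})$. Using the unique action at $s_{0}$ ensures agreement, and the self-loop construction at $s_{1}, s_{2}$ turns the value-function targets into a handful of routine linear equations in the rewards and costs. Extending the argument to every $\lambda_{t} > 0$ follows by an obvious rescaling of $g$, and $\lambda_{t} = 0$ reduces to standard DR-RL on $r$ alone, for which a linear contraction does exist and no contradiction is expected; this localizes the impossibility precisely to the nontrivial interaction between constraints and distributional robustness, as advertised in Section \ref{sec:inherent}.
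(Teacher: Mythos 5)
Your proposal is correct and follows essentially the same route as the paper's proof: both derive the identical key inequality $|v^{\pi_1}(s)-v^{\pi_2}(s)|\le \gamma\|v^{\pi_1}-v^{\pi_2}\|_\infty$ for policies agreeing at $s$ from linearity, contraction, and stationarity, and then refute it with an explicit small robust MDP in which the worst-case kernels for $r$ and $g$ decouple. Your counterexample differs only in its bookkeeping (three states, values engineered so that $v^{\pi_1}$ and $v^{\pi_2}$ agree off $s_0$, giving a contradiction for every $\gamma\in(0,1)$ and, by rescaling $g$, every $\lambda_t>0$, versus the paper's two-state example at $\gamma=0.95$ with $\lambda_t$ in a numerical window), and the arithmetic you sketch checks out, including non-negativity of the rewards realizing the target values.
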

\begin{corollary}
    \label{impossible_cor}
    There is no consistency operator $\mathcal{T}^{\pi}$ that enables greedy policy while retaining as a $\gamma$-contraction to $V^{\pi}_{r} - \lambda_t^{\top}V^{\pi}_{g}$.
\end{corollary}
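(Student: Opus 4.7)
The plan is to deduce Corollary \ref{impossible_cor} from Theorem \ref{impossible_thm} via Lemma \ref{eq_linear_greedy}. Argue by contradiction: assume there exists a consistency operator $\mathcal{T}^{\pi}$ that (a) enables the greedy policy with some witness $g : \mathbb{R}^{\mathcal{S}} \times \mathcal{A} \to \mathbb{R}^{\mathcal{S}}$, and (b) is a $\gamma$-contraction whose unique fixed point is $v^{\pi} \defn V^{\pi}_{r} - \lambda_t^{\top} V^{\pi}_{g}$ for every policy $\pi$. The goal is to construct from $\mathcal{T}^{\pi}$ a linear operator that also satisfies (b), yielding a contradiction with Theorem \ref{impossible_thm}.

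Given $g$, I would define the linear candidate
\[
[\tilde{\mathcal{T}}^{\pi} v](s) \defn \langle \pi[s,\cdot],\, g(v,\cdot)[s] \rangle = \sum_{a} \pi(a \mymid s)\, g(v,a)[s],
\]
which is linear in $\pi$ via $f(v)[s,a] = g(v,a)[s]$ (Definition \ref{linearity_definition}). The greedy-enabling identity gives $\max_{\pi}[\tilde{\mathcal{T}}^{\pi}v](s) = \max_{a}g(v,a)[s] = \max_{\pi}[\mathcal{T}^{\pi}v](s)$, so $\tilde{\mathcal{T}}^{\pi}$ matches $\mathcal{T}^{\pi}$ at the policy improvement step exactly as in Lemma \ref{eq_linear_greedy}(ii). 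The remaining task is to verify that $\tilde{\mathcal{T}}^{\pi}$ also inherits property (b) per-policy: for every $\pi$, $\tilde{\mathcal{T}}^{\pi}$ is a $\gamma$-contraction with unique fixed point $v^{\pi}$. Once established, $\tilde{\mathcal{T}}^{\pi}$ satisfies both hypotheses of Theorem \ref{impossible_thm}, which is forbidden.

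The main obstacle is the per-policy fixed-point transfer for non-greedy $\pi$: the greedy-enabling identity only pins down $\max_\pi [\mathcal{T}^\pi v](s)$, so it does not determine $[\mathcal{T}^\pi v](s)$—nor a single canonical $g(v,\pi(s))[s]$—when $\pi$ is not per-state greedy. My plan is to pick $g$ canonically so as to force $\tilde{\mathcal{T}}^{\pi} v^{\pi} = v^{\pi}$ state-wise: specifically, tie $g(v,a)[s]$ to the operator's action on an adaptively chosen deterministic slicing policy that agrees with the current $\pi$ elsewhere, and then leverage the per-policy fixed-point equation $\mathcal{T}^{\pi}v^{\pi}=v^{\pi}$ to conclude the same fixed point for $\tilde{\mathcal{T}}^{\pi}$, extending linearly to mixed $\pi$. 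The $\gamma$-contraction of $\tilde{\mathcal{T}}^{\pi}$ would then follow because each $g(\cdot,a)[s]$ is $\gamma$-Lipschitz, inherited from the $\gamma$-contraction of $\mathcal{T}^{\pi}$ on the slicing policies. If such a canonical choice proves elusive, a fallback is to directly port the small-MDP counterexample underlying Theorem \ref{impossible_thm}: the structural mismatch it exposes—between the state-coupled target $v^{\pi}$ (which depends on worst-case transitions that differ across $r$ and $g$) and any state-decoupled maximization of the form $\max_a g(v,a)[s]$—is indifferent to whether the operator is fully linear or merely has a linear maximum, so the same instance already rules out any greedy-enabling $\gamma$-contraction to $v^{\pi}$.
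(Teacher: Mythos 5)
Your overall route --- reduce to Theorem \ref{impossible_thm} through Lemma \ref{eq_linear_greedy}(ii) --- is the one the paper intends (the paper states the corollary without a separate proof, treating it as immediate from the lemma and the theorem), and you have correctly put your finger on the step that needs work: the linear substitute $\tilde{\mathcal{T}}^{\pi}v = \langle \pi[s,\cdot], g(v,\cdot)[s]\rangle$ is only guaranteed to agree with $\mathcal{T}^{\pi}$ at the policy-improvement maximum, so hypothesis (ii) of Theorem \ref{impossible_thm} (that $V^{\pi}_{r}-\lambda_t^{\top}V^{\pi}_{g}$ is the fixed point of the operator for \emph{every} $\pi$, with $\gamma$-contraction) does not automatically transfer. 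The problem is that neither of your two repairs closes this gap. Definition \ref{greedy_policy} constrains $g$ only through the single identity $\max_a g(v,a)[s]=\max_{\pi}[\mathcal{T}^{\pi}v](s)$; it says nothing about $[\mathcal{T}^{\pi}v](s)$ for a policy that is not per-state maximizing, and hence gives no handle for choosing $g$ ``canonically'' so that $g(v^{\pi},\pi(s))[s]=v^{\pi}(s)$ for all $\pi$. Your adaptive ``slicing policy'' construction would need exactly such an identity, and nothing in the greedy-enabling hypothesis supplies it.

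The fallback fails for a related reason. The counterexample behind Theorem \ref{impossible_thm} derives its contradiction from $|v^{\pi_1}(s_0)-v^{\pi_2}(s_0)|\le\gamma\|v^{\pi_1}-v^{\pi_2}\|_\infty+|\langle\pi_1(s_0)-\pi_2(s_0),f(v^{\pi_2})[s_0,\cdot]\rangle|$, and the entire point of choosing $\pi_1(s_0)=\pi_2(s_0)$ is that \emph{linearity} makes the second term vanish: at state $s_0$ the operator may depend on $\pi$ only through $\pi(s_0)$. A merely greedy-enabling operator can depend on the whole policy --- the paper's own operator \eqref{general_operator} does, through $V^{\pi}_{r}$ and $V^{\pi}_{g}$, and it \emph{is} a $\gamma$-contraction to the target by Proposition \ref{basic_properties_T} --- so the cancellation at $s_0$ is unavailable and ``the same instance'' does not by itself rule out greedy-enabling contractions. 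To complete the argument you must either strengthen the reduction so that the linear substitute provably inherits the contraction-to-target property (which requires reading more into Definition \ref{greedy_policy} than is literally written, e.g.\ that the operator's value at every policy, not just its maximum, is recovered from $g$), or rerun the counterexample using only consequences you can actually derive from greedy-enabling. As written, the proposal identifies the right obstacle but does not overcome it.
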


Theorem \ref{impossible_thm} and Corollary \ref{impossible_cor} highlight the additional difficulty of DRC-RL as compared to robust or constrained RL, where iterative methods can be successful. Our proof (Appendix \ref{proof:inherent}) demonstrates how this difficulty arises from the combination of constraints and distributional robustness.

Comparing our previous success in the case of R-contamination sets (Section \ref{sec:contamination}) to these impossible results, it becomes evident that the additional fail-state assumption is critical.
This assumption restricts the possible value function space and provides an additional transition structure that avoids the challenges underlying Theorem \ref{impossible_thm}.

Following the fail-state assumption, we believe it is possible to design additional conditions for other forms of uncertainty sets that resolve such impossibilities. 
Although we have not yet found clear and rigorous guidance, we tentatively acknowledge that the absence of linearity in worst-case transition kernels is essential for Theorem \ref{impossible_thm}. 
Therefore a structured value function space or an augmented state space (such as in \citet{sootla2022saute}) might be helpful to give tractable solutions with our framework in Section \ref{sec:framework}.



\section{Experiments and Evaluation}
\label{sec:experiments}
In this section, we present a focused experiment to validate our solution in Section \ref{sec:contamination}. This solution specializes to the case of constrained RL in \citet{le2019batch} when setting the robustness level to 0. We present critical settings here and refer to Appendix \ref{appendix:3} for more details.

\begin{figure}[t]
    \centering
    \subfigure[Constraints Satisfaction under Shifts]{
        \includegraphics[width = .56\linewidth]{./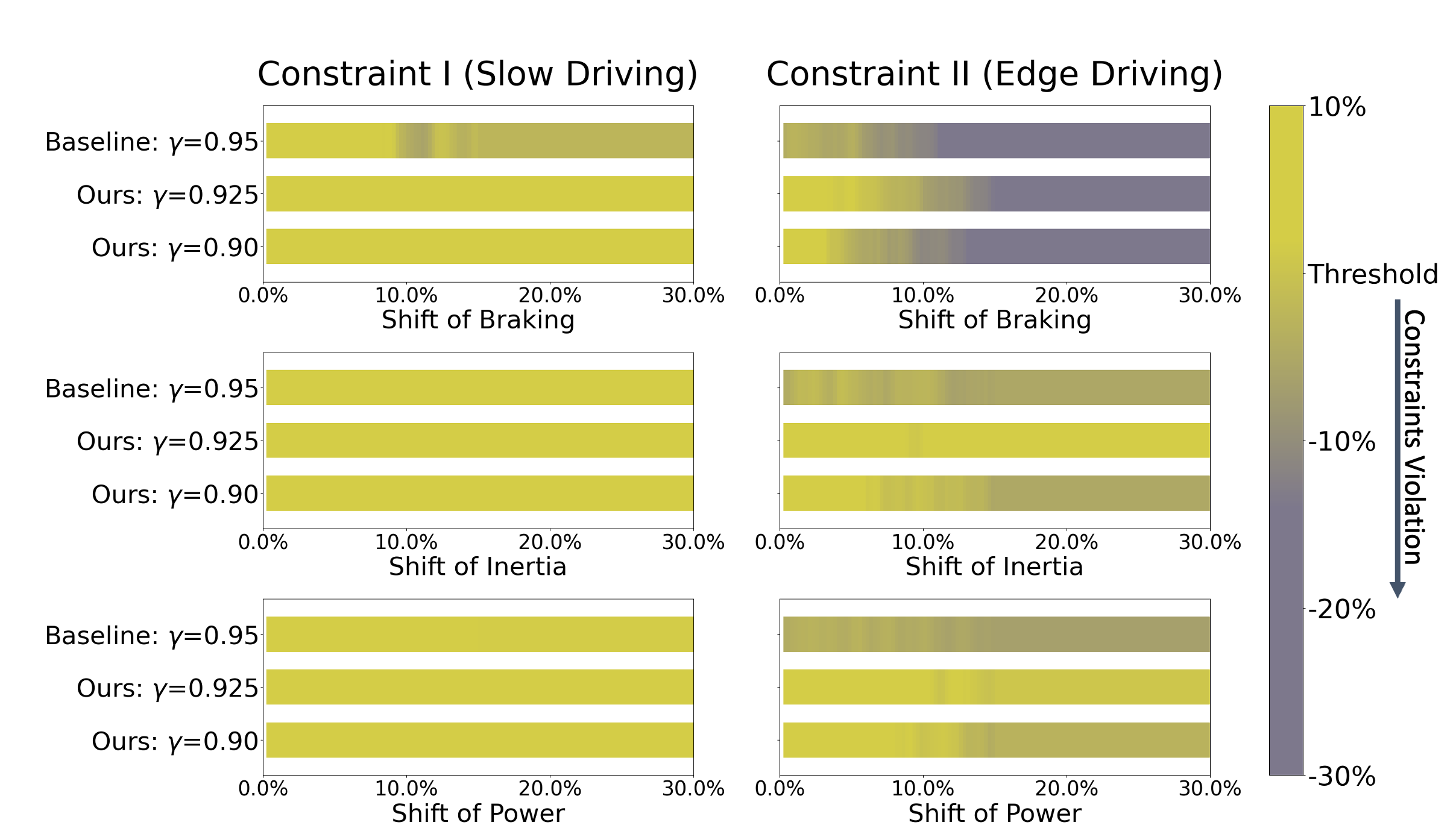}
    }
    \subfigure[Performance under Steering Shift]{
        \includegraphics[width = .40\linewidth]{./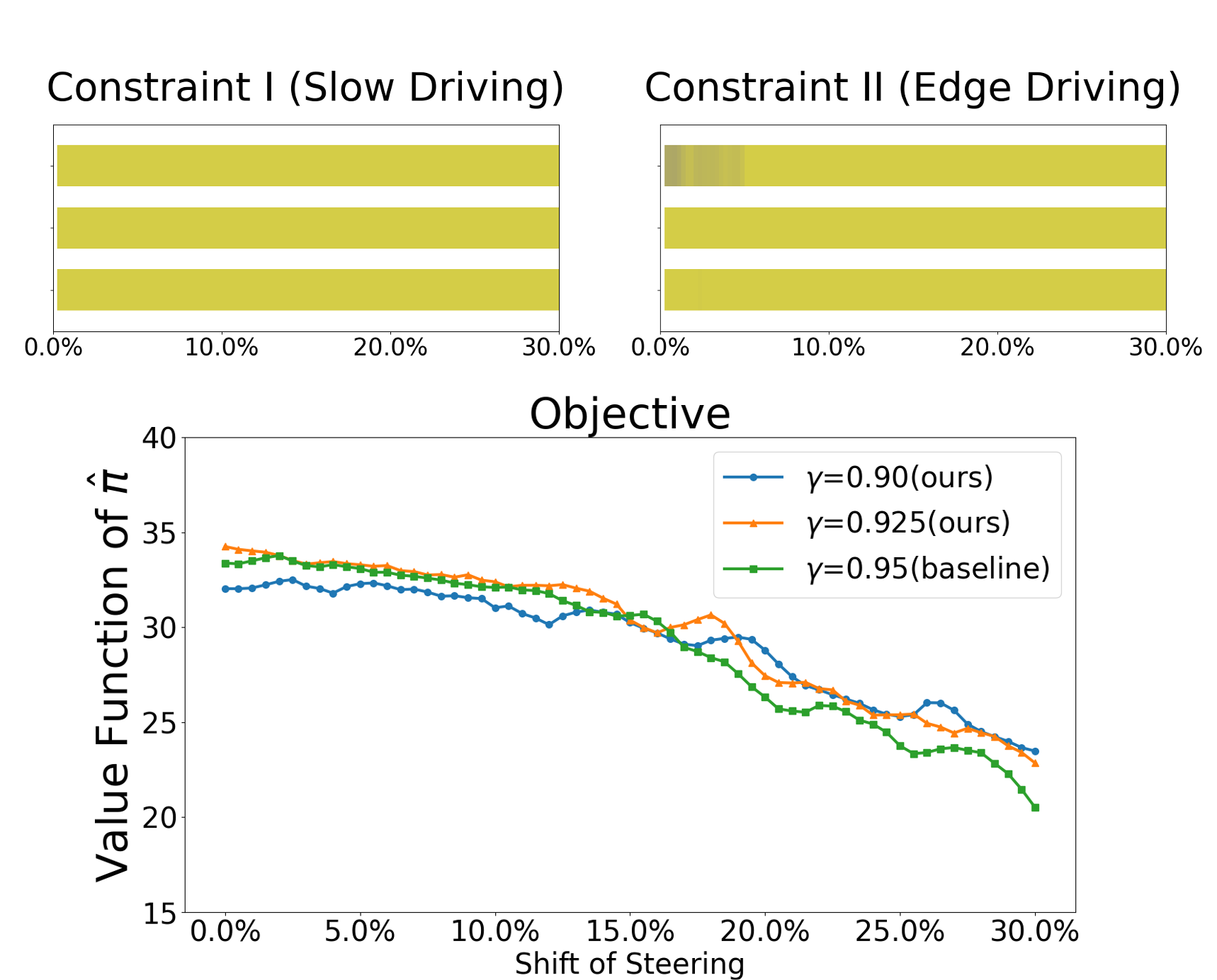}
    }
    \vspace{-0.15in}
    \caption{The four bar graphs denote the constraints satisfaction (green means satisfied) when shifts of power, inertia, braking magnitude, and steering angle occur. The lower right figure indicates the value of the objective (higher is better) when the steering angle is shifted. All evaluations are based on the value function (accumulated rewards) of mixture policy $\hat{\pi}$.} \label{fig:experiments}
\end{figure}

\textbf{Task Setting.} We choose the high-dimensional Car Racing task \citep{towers_gymnasium_2023} where the agents must traverse as far as possible on track, with each reward for passed tile of track and a small negative reward for each second. Two constraints are designed: slow driving and edge driving. 
States off the track excessively are considered fail-states.


\textbf{Algorithms Design and Baseline.} We adopt Fitted Q Iteration \citep{ernst2005tree} and Exponentiated Gradient \citep{kivinen1997exponentiated} for\BestResponse~and\OnlineAlgo. Evaluations are executed using the simulator and meet the requirements of Proposition \ref{meta_prop}. We choose $\gamma = 0.95$ as the initial discount factor for the baseline, since in this case our solution retrieves the constrained RL algorithm in \citet{le2019batch} and is considered as zero robustness level . 
We select two robustness levels $\beta$ for evaluation.  These result in discount factors $\gamma(1 - \beta) = \{0.90, 0.925\}$, and correspond to a maximized $50\%$ decrease of the effective horizon. All other hyperparameters, including random seeds, do not change across $\gamma$.

\textbf{Procedure and Criteria.} For each $\gamma$, 25 rounds are executed, which guarantees a duality gap of less than $.01$. The mixture policy $\hat{\pi} = \sum_{t = 1}^{25} \frac{1}{25} \pi_t$ is then tested in the shifted environments. When testing with a shifted environment, the objective and two constraints for each mixture policy are measured with value function and initial discount factor $\gamma = 0.95$. All results in shifted environments are smoothed as the mean of 3 random seeds and a $1\%$ shift window. 
We regard constraints as hard and prioritize constraint satisfaction as the main criterion for robustness, with the value of the objective as the secondary criterion. 

\textbf{Results.} Our experiments verify that smaller $\gamma$ learns a more robust policy in the car-racing example. 
Results are shown in Figure \ref{fig:experiments}: In the left plot, we present constraint satisfaction under shifts of power, inertia, and brake magnitude.  Our learned policies with smaller discount factors perform better.  In the right plot, we present a full set of evaluations when the steering angle is shifted.  Here the learned policies with smaller discount factors not only satisfy more constraints, but may also have better objective value when a certain shift occurs.

\section{Related Works}\label{sec:related-work}


\textbf{Constrained RL.} 
Constrained RL aims at maximizing expected cumulative reward while adhering to specified constraints. Applications of Constrained RL cover a wide array of topics, such as resource allocation for numerous users in grid systems \citep{wang2020safeoff,de2021constrained,mo2023framework},
human satisfaction in human-robot interaction \citep{el2020towards,liu2023safe}, and the safety level of robotic agents \citep{wachi2020safe,zhang2020cautious,brunke2022safe,tambon2022certify}. 
The underlying decision-making problem in constrained RL can be represented as a constrained Markov decision process \citep{altman2021constrained} that has a bilevel structure with strong duality \citep{miryoosefi2019reinforcement, paternain2019constrained}. 
Most works in constrained RL employ model-based methods \citep{efroni2020exploration, bura2022dope}. To develop model-free and policy gradient methods, many additional constrained RL algorithms, involving online \citep{ding2021provably,wachi2021safe} or offline \citep{le2019batch} interactions, embrace a primal-dual methodology. 

\textbf{Distributionally Robust RL.}
Distributionally robust RL tackles the challenge of formulating a policy resilient to shifts between training and testing environments by using robust Markov decision process \citep{nilim2003robustness,iyengar2005robust} as the underlying decision-making problem. Many heuristic works \citep{xu2010distributionally, wiesemann2013robust, yu2015distributionally, mannor2016robust, russel2019beyond} have shown distributional robustness is valuable when environment shifts occur. Recently, work has also started to provide a concrete theoretical understanding of sample complexity \citep{yang-2022, pmlr-v151-panaganti22a, xu-panaganti-2023samplecomplexity,shi2024sample,shi2022distributionally}. Additionally, other recent work \citep{panaganti2020robust,wang2021online,pmlr-v151-panaganti22a} has employed general function approximation to devise model-free online and offline robust RL algorithms, though without theoretical guarantees.

\textbf{Related work on DRC-RL.} The study of DRC-RL can be traced back to at least \citet{russel2020robust} and \citet{mankowitz2020robust}, where the basic formulation is proposed and first-order methods, such as Robust Constrained Policy Gradient (RCPG), are studied. More recently, \citet{wang2022robust} focuses on the dual problem and proposes a first-order method that achieves the convergence guarantee to a stationary point with additional approximation, and \citet{bossens2023robust} extends RCPG to Lagrangian or adversarial updates for the dual problem of a different formulation.
A simultaneous work of ours is \citet{sun2024constrained}, where they proposed a projected gradient descend style algorithm that guarantees per-step improvement and constraints violation.
However, these existing works do not provide provable guarantees for an end-to-end framework. 

Beyond the distributional robustness we studied, a variety of relaxations quantifying robustness, such as assuming a stochastic setting with uncertainty on the distribution of kernels \citep{queeney2024risk} or considering risk-averse constraints \citep{kim2024trust}, are studied. Such relaxation provides a more informative structure to avoid the minimax formulation, and is therefore beyond the scope of this paper.




\section{Conclusion}

In this paper, we present an algorithmic framework for the distributionally robust constrained RL problem (DRC-RL).
Our framework provides the first efficient provable solution for R-contamination uncertainty sets.  We additionally prove the intractability of greedy policies for general uncertainty sets, which prevents the use of popular iterative methods unless the uncertainty sets and additional assumptions maintain additional structures. 
In the case of R-contamination uncertainty sets, a simple rule relating the discount factor and distributional robustness is discovered, which may be of broader interest. 
In particular, such future questions may involve inventing new algorithms for other uncertainty sets, exploring the tractability of gradient-based methods, or studying the additional structure we need for encompassing both distributional robustness and constraint satisfaction.

\section{Acknowledgment}

The work is supported by the NSF through CNS-2146814, CPS-2136197, CNS-2106403, NGSDI-2105648, CCF-1918865, and by the Resnick Sustainability Institute at Caltech, and by the gift from Latitude AI.
Z. Zhang acknowledges support from the Tsien's Excellence in Engineering Program at Tsinghua. 
K. Panaganti acknowledges support from the 'PIMCO Postdoctoral Fellow in Data Science' fellowship at Caltech.
The work of L. Shi is supported in part by the Resnick Institute and Computing, Data, and Society Postdoctoral Fellowship at Caltech. 
The authors thank anonymous RLC 2024 reviewers for their constructive comments on an earlier draft of this paper. 


\bibliography{main}
\bibliographystyle{rlc}


\newpage
\appendix

\section{Proof for Section~\ref{sec:framework}: DRC-RL with General Uncertainty Sets}
\label{sec:appendix_proof_sec3}

\subsection{Proof of Proposition \ref{policy_convexify}}
\label{sec:proof_convexify}

We assume $\pi_{\alpha,\{\pi_i\}_{i=1}^T}, \pi'_{\alpha', \{\pi_i\}_{i=1}^T} \in Conv(\Pi)$ are two mixed policies defined in Section \ref{sec:meta} with the same candidate deterministic policy set without loss of generality (Otherwise we could simply combine two sets and set zero to new candidate policy for each categorical distribution). For brevity, we use $\pi_{\alpha}$ and $\pi_{\alpha'}$ to denote them.

First, by the definition of mixed policy in Section \ref{sec:meta}, $Conv(\Pi)$ is indeed a convex hull that linear combination of policies over $c\in[0, 1]$ satisfies, \begin{equation}
    \pi_{c\alpha + (1 - c)\alpha'} = c\pi_{\alpha} + (1 - c)\pi_{\alpha'} \in Conv(\Pi).
\end{equation}

Then we show that the robust value function is linear to mixed policy, with $c \in [0, 1]$, \begin{align}
    V^{\pi_{c\alpha + (1 - c)\alpha'}}_r =&~ \sum_{\pi} (c\alpha(\pi) + (1 - c)\alpha'(\pi)) V^{\pi}_r\\
    =&~c\sum_{\pi} \alpha(\pi) V^{\pi}_{r} + (1 - c)\sum_{\pi} \alpha'(\pi) V^{\pi}_r \\
    =&~ cV^{\pi_{\alpha}} + (1 - c)V^{\pi_{\alpha'}}.
\end{align} where the first equality comes from the definition of the robust value function of a mixed policy in Section \ref{sec:meta}.

Naturally, the robust value function for rewards of constraints $V^{\pi}_g$ is also linear to mixed policy. 

Therefore, the constrained optimization problem (\ref{conservative_form}) becomes convex, and hence the strong duality holds with Slater's condition \citep{boyd2004convex}.

\subsection{Proof of Proposition \ref{meta_prop}}

First, as proved in \ref{sec:proof_convexify}, Lagrangian $L(\pi, \lambda) = V^{\pi}_{r} - \lambda^T V^{\pi}_g$ is linear to both policy $\pi$ and multiplier $\lambda$. (We are treating $\Pi$ as its convex hull $Conv(\Pi)$ now.)

When\OnlineAlgo~is chosen as a no-regret online learning algorithm with the negative Lagrangian $-L(\pi, \lambda)$ as loss \citep{kivinen1997exponentiated, zinkevich2003online}, we have \begin{equation}
    \sum_{t} (-L)(\pi_t, \lambda_t) \ge \max_{\lambda} \sum_{t} (-L)(\pi_t, \lambda) - o(T) \label{eq_no_regret}
\end{equation}
Then, recalling that $\pi_t$ is the\BestResponse~(\ref{best_response}) given current $\lambda_t$, we have 
\begin{align}
    \min_{\lambda} L(\hat{\pi}_T, \lambda) =&~ \min_{\lambda} \frac{1}{T} \sum_{t} L(\pi_t, \lambda)\\
    =&~-\max_{\lambda} \frac{1}{T} \sum_{t} - L(\pi_t, \lambda)\\
    \overset{\mathrm{(i)}}{\ge}&~ \frac{1}{T} \sum_t L(\pi_t, \lambda_t) - \frac{o(T)}{T}\\
    \overset{\mathrm{(ii)}}{\ge}&~ \frac{1}{T} \sum_t L(\pi, \lambda_t) - \frac{o(T)}{T}~~,~\forall \pi \in \Pi\\
    =&~ L(\pi, \hat{\lambda}_T) - \frac{o(T)}{T}~~,~\forall \pi \in \Pi
\end{align}
where the first and the last equalities come from the linearity of the Lagrangian w.r.t. the policy and the Lagrange multiplier, and the definition of $\hat{\pi}_T$ and $\hat{\lambda}_T$ in meta algorithm \ref{Meta_Algo}; (i) holds by Eq.(\ref{eq_no_regret}), and (ii) arises from the fact that $\pi_t$ is the\BestResponse~(\ref{best_response}) 

Additionally, inserting the fact $\max_{\pi} L(\pi, \hat{\lambda}_T) \ge L(\hat{\pi}_T, \hat{\lambda}_T) \ge \min_\lambda L(\hat{\pi}_T, \lambda)$ 
one has
\begin{equation}
    \max_{\pi} L(\pi, \hat{\lambda}_T) \ge \min_\lambda L(\hat{\pi}_T, \lambda) \ge \max_{\pi} L(\pi, \hat{\lambda}_T) - \frac{o(T)}{T}.
\end{equation} 
Finally, recalling that $L_{\mathrm{max}} = \max_{\pi} L(\pi, \hat{\lambda}_T)$ and  $L_{\mathrm{min}} = \min_\lambda L(\hat{\pi}_T, \lambda)$ in meta algorithm \ref{Meta_Algo}. The duality gap $L_{\mathrm{max}} - L_{\mathrm{min}}$ is bound to smaller than positive threshold $\omega$. 
The exact convergence rate of Algorithm \ref{Meta_Algo} will depend on the choice of\OnlineAlgo. For example, the algorithm will terminate after $\mathcal{O}(\frac{1}{\omega^2})$ rounds if online algorithms with regret scaling as $\Omega(\sqrt{T})$ are chosen (such as online gradient descent with regularizer).   
\qed

\subsection{Proof for Proposition \ref{basic_properties_T}}

Through out this proof, we denote $P^o_{\pi}$ as the vector $[P^o_{s, \pi(s)}]^{\top}_{s \in \mathcal{S}}$, and $v_1, v_2 \in \RSt, s \in \cS$.

\textbf{Monotonicity}. By the definition of $\mathcal{T}^{\pi}$ , we have with $v_1 \ge v_2$, \begin{equation}
    \mathcal{T}^{\pi} v_1 - \mathcal{T}^{\pi} v_2 = \gamma \langle P^o_{\pi}, v_1 - v_2 \rangle \ge 0.
\end{equation}
Then for $\mathcal{T}^{*}$, we denote $\pi^s_1 \coloneqq \argmax_{\pi \in \Pi}  \mathcal{T}^{\pi} v_1(s)$ and similar for $\pi^s_2$, we then have for every state $s$, \begin{equation}
    \mathcal{T}^{*} v_1(s) - \mathcal{T}^{*} v_2(s) = \mathcal{T}^{\pi^s_1}_s v_1(s) - \mathcal{T}^{\pi^s_2}_s v_2(s) \ge \mathcal{T}^{\pi^s_1}_s v_1(s) - \mathcal{T}^{\pi^s_1}_s v_2(s) \ge 0.
\end{equation}
The last inequality comes from the monotonicity of $\mathcal{T}^{\pi}$, which completes the proof.

\textbf{Transition Invariance}. From the definition of $\mathcal{T}^{\pi}$ in (\ref{general_operator}), we have \begin{equation}
    \mathcal{T}^{\pi} (v_1 + c\mathbf{1}) = \mathcal{T}^{\pi} v_1 + \gamma \langle P^o_{\pi}, c\mathbf{1} \rangle = \mathcal{T}^{\pi} v_1 + \gamma c\mathbf{1}.
\end{equation}
Then similar for $\mathcal{T}^{*}$, we have, \begin{equation}
    \mathcal{T}^{*} (v_1 + c\mathbf{1}) = \mathcal{T}^{*} v_1 + \gamma \langle P^o_{\pi}, c\mathbf{1} \rangle = \mathcal{T}^{*} v_1 + \gamma c\mathbf{1}.
\end{equation}

\textbf{Contraction}. we first show the $\gamma$-contraction property of $\mathcal{T}^{\pi}$ by its definition in (\ref{general_operator}), \begin{equation}
    |[\mathcal{T}^{\pi} v_1](s) - [\mathcal{T}^{\pi} v_2](s)| = |\gamma \langle P^0_{s, \pi(s)}, v_1 - v_2 \rangle|
    \le \gamma \Vert v_1 - v_2\Vert_{\infty}. \label{temp3}
\end{equation} where the last inequality comes from the distribution nature of $P^0_{s, \pi(s)}$.

Then for $\mathcal{T}^{*}$ and state $s$, we assume $v_1(s) \ge v_2(s)$ without losing generality, \begin{align}
     |[\mathcal{T}^{*} v_1](s) - [\mathcal{T}^{*} v_2)](s)| =&~ [\mathcal{T}^{\pi^s_1}_s] v_1(s) - [\mathcal{T}^{\pi^s_2}_s] v_2(s)  \notag\\
     \overset{\mathrm{(i)}}{\le}&~ 
     [\mathcal{T}^{\pi^s_1}_s v_1](s) - [\mathcal{T}^{\pi^s_1}_s v_2](s) \\
     \le&~ 
     |[\mathcal{T}^{\pi^s_1}_s v_1](s) - [\mathcal{T}^{\pi^s_1}_s v_2](s)| \\
     \overset{\mathrm{(ii)}}{\le}&~ \gamma \Vert v_1 - v_2\Vert_{\infty}
\end{align} where the first equality comes from the definitions of $\pi^s_1$ and $\pi^s_2$, inequality (i) comes from the fact that $[\mathcal{T}^{\pi^s_2}_s] v_2(s) = [\max_{\pi} \mathcal{T}^{\pi} v_2](s) \ge [\mathcal{T}^{\pi^s_1}_s v_2](s)$, and inequality (ii) comes from the contraction property of $\mathcal{T}^{\pi}$ in (\ref{temp3}).

Finally, We apply $\mathcal{T}^{\pi}$ on our objective $V^{\pi}_r - \lambda_t^{\top} V^{\pi}_g$, which yields \begin{align}
    [\mathcal{T}^{\pi} (V^{\pi}_r - \lambda_t^{\top} V^{\pi}_g)](s) =&~ (r - \lambda_t^{\top}g)(s, \pi(s)) + \gamma \langle P^o_{s, \pi(s)}, V^{\pi}_r - \lambda_t^{\top} V^{\pi}_g) \rangle\\
    &~+ \gamma \min_{P \in \mathcal{P}} \langle P_{s, \pi(s)} - P^o_{s, \pi(s)}, V^{\pi}_{r} \rangle - \gamma \lambda_t^{\top} \min_{P \in \mathcal{P}} \langle P_{s, \pi(s)} - P^o_{s, \pi(s)}, V^{\pi}_{g} \rangle  \nonumber \\
    =&~r(s, \pi(s)) +  \gamma \langle P^o_{s, \pi(s)}, V^{\pi}_r \rangle + \gamma \min_{P \in \mathcal{P}} \langle P_{s, \pi(s)} - P^o_{s, \pi(s)}, V^{\pi}_{r} \rangle \\
    &~- \lambda_t^{\top} \left(g(s, \pi(s)) +  \gamma \langle P^o_{s, \pi(s)}, V^{\pi}_g \rangle + \gamma \min_{P \in \mathcal{P}} \langle P_{s, \pi(s)} - P^o_{s, \pi(s)}, V^{\pi}_{g}) \rangle \right)\nonumber \\
    =&~r(s, \pi(s)) + \gamma \min_{P \in \mathcal{P}} \langle P_{s, \pi(s)}, V^{\pi}_{r} \rangle \\
    &~- \lambda_t^{\top} \left(g(s, \pi(s)) +  \gamma \min_{P \in \mathcal{P}} \langle P_{s, \pi(s)}, V^{\pi}_{g}) \rangle \right)\nonumber \\
    =&~ [\mathcal{T}^{\pi}_{r, rob} V^{\pi}_r](s) - \lambda_t^{\top} [\mathcal{T}^{\pi}_{g, rob} V^{\pi}_g](s)\\
    =&~ (V^{\pi}_r - \lambda_t^{\top} V^{\pi}_g)(s)
\end{align} where the first equality comes from the definition of consistency operator $\mathcal{T}^{\pi}$ in (\ref{general_operator}), the second equality is a simple rearrangement, the third equality comes from the fact that $\langle P^o_{s, \pi(s)}, V^{\pi}_{g} \rangle$ is independent of $P \in \mathcal{P}$, the fourth equality comes from the definition of robust consistency operator in \ref{dr_operator}, and the last equality comes from the contraction property of robust consistency operator \citep{iyengar2005robust}.

Combine with the fact that operator $\mathcal{T}^{\pi}$ is a $\gamma$-contraction, $V^{\pi}_r - \lambda_t^{\top} V^{\pi}_g$ is the only stationary point of $\mathcal{T}^{\pi}$, the proof is complete.
\qed

\subsection{Proof for Theorem \ref{br_L_inf_error_bound}}

\subsubsection{Preliminary}

To start with, we introduce two preliminary propositions from \citet{scherrer2015approximate} where Proposition \ref{tech_l1} builds three relations, and Proposition \ref{tech_inequality} is a direct application of these three relations. Definition \ref{def_discounted_kernel_set} is used to simplify notation in proposition \ref{tech_inequality}.

\begin{proposition}[Lemma 2, \citet{scherrer2015approximate}]
    Consider approximate modified policy iteration scheme with standard bellman operator $\mathcal{T}_{\mathrm{st}}^{\pi}$. \begin{equation}
        \begin{cases}
            \pi^{k + 1} &=~~ \argmax^{\epsilon'_{k + 1}}_{\pi} \mathcal{T}_{\mathrm{st}}^{\pi} v^k = \mathcal{G}^{\epsilon'_{k + 1}} v^k\\
            v^{k + 1} &=~~ (\mathcal{T}_{\mathrm{st}}^{\pi^{k + 1}})^m v^k + \epsilon_{k + 1} \label{ampi_standard}
        \end{cases} 
    \end{equation} where $\pi^{k + 1}$ is the greedy policy with respect to $v^k$ with some error $\epsilon'_{k + 1}$, e.g. $\forall \pi \in \Pi, \mathcal{T}_{\mathrm{st}}^{\pi^{k + 1}} v^k + \epsilon'_{k + 1} \ge \mathcal{T}_{\mathrm{st}}^{\pi} v^k$.

    Let $v_*$ denote the optimal value function, $d_k \coloneqq v_* - (\mathcal{T}_{\mathrm{st}}^{\pi^{k}})^m v^{k - 1}$, $s_k \coloneqq (\mathcal{T}_{\mathrm{st}}^{\pi^{k}})^m v^{k - 1} - v_{\pi^k}$ and $b_k \coloneqq v^k - \mathcal{T}_{\mathrm{st}}^{\pi^{k + 1}} v^{k}$, then for $k \ge 1$, we have, \begin{align}
        b_k \le&~ (\gamma P_{\pi^k})^m b_{k - 1} + x_k\\
        d_{k + 1} \le&~ \gamma P_{\pi_*}d_k + y_k + \sum_{j = 1}^{m - 1}(\gamma P_{\pi^{k + 1}})^j b_k\\
        s_k =&~ (\gamma P_{\pi^k})^m(I - \gamma P_{\pi^k})^{-1}b_{k - 1}
    \end{align} where $x_k \coloneqq (I - \gamma P_{\pi^k})\epsilon_k + \epsilon'_{k + 1}$ and $y_k \coloneqq -\gamma P_{\pi_*}\epsilon_k + \epsilon'_{k + 1}$. \label{tech_inequality}
\end{proposition}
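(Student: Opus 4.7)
The plan is to verify the three displayed relations by direct operator calculus, treating each as an exercise in telescoping around two basic facts: the affine form of the Bellman operator, $\mathcal{T}_{\mathrm{st}}^{\pi} u - \mathcal{T}_{\mathrm{st}}^{\pi} u' = \gamma P_{\pi}(u - u')$, which lets differences pass through powers of $\mathcal{T}_{\mathrm{st}}^{\pi}$ as $(\gamma P_{\pi})^m$; and the approximate-greedy inequality $\mathcal{T}_{\mathrm{st}}^{\pi^{k+1}} v^k + \epsilon'_{k+1} \ge \mathcal{T}_{\mathrm{st}}^{\pi} v^k$ for every $\pi \in \Pi$. I will also use the defining identity $v^k = (\mathcal{T}_{\mathrm{st}}^{\pi^k})^m v^{k-1} + \epsilon_k$ from the AMPI scheme \eqref{ampi_standard} repeatedly.

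For the first inequality, I would start from $b_k = v^k - \mathcal{T}_{\mathrm{st}}^{\pi^{k+1}} v^k$, apply the approximate-greedy bound with the test policy $\pi^k$ to get $b_k \le v^k - \mathcal{T}_{\mathrm{st}}^{\pi^k} v^k + \epsilon'_{k+1}$, and then substitute $v^k = (\mathcal{T}_{\mathrm{st}}^{\pi^k})^m v^{k-1} + \epsilon_k$ on both sides. Affine-ness of $\mathcal{T}_{\mathrm{st}}^{\pi^k}$ peels the $\epsilon_k$ off inside the operator as $\gamma P_{\pi^k}\epsilon_k$, and since the previous iterate produced $\pi^k$ as (approximately) greedy with respect to $v^{k-1}$, the residual $v^{k-1} - \mathcal{T}_{\mathrm{st}}^{\pi^k} v^{k-1}$ equals $b_{k-1}$. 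Pushing $(\mathcal{T}_{\mathrm{st}}^{\pi^k})^m$ through this difference gives $(\gamma P_{\pi^k})^m b_{k-1}$, and the leftover $\epsilon$ terms collect to $(I - \gamma P_{\pi^k})\epsilon_k + \epsilon'_{k+1} = x_k$.

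For the bound on $d_{k+1}$, I would insert $\mathcal{T}_{\mathrm{st}}^{\pi^{k+1}} v^k$ between $v_*$ and $(\mathcal{T}_{\mathrm{st}}^{\pi^{k+1}})^m v^k$ and telescope the second difference as $-\sum_{j=1}^{m-1} (\gamma P_{\pi^{k+1}})^j (\mathcal{T}_{\mathrm{st}}^{\pi^{k+1}} v^k - v^k)$, which immediately produces $\sum_{j=1}^{m-1}(\gamma P_{\pi^{k+1}})^j b_k$. For the remaining piece $v_* - \mathcal{T}_{\mathrm{st}}^{\pi^{k+1}} v^k$, I use $v_* = \mathcal{T}_{\mathrm{st}}^{\pi_*} v_*$ and the approximate-greedy inequality in the direction $\pi = \pi_*$ to replace $\mathcal{T}_{\mathrm{st}}^{\pi^{k+1}} v^k$ by $\mathcal{T}_{\mathrm{st}}^{\pi_*} v^k - \epsilon'_{k+1}$; affine-ness then reduces the difference to $\gamma P_{\pi_*}(v_* - v^k) + \epsilon'_{k+1}$, and substituting $v_* - v^k = d_k - \epsilon_k$ yields exactly $\gamma P_{\pi_*} d_k + y_k$ with $y_k = -\gamma P_{\pi_*}\epsilon_k + \epsilon'_{k+1}$.

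The third relation is an exact identity and requires no inequality: since $v_{\pi^k}$ is the fixed point of $\mathcal{T}_{\mathrm{st}}^{\pi^k}$, affine-ness gives $(\mathcal{T}_{\mathrm{st}}^{\pi^k})^m v^{k-1} - v_{\pi^k} = (\gamma P_{\pi^k})^m (v^{k-1} - v_{\pi^k})$, and writing $v^{k-1} - v_{\pi^k}$ as the telescoping sum $\sum_{n \ge 0} [(\mathcal{T}_{\mathrm{st}}^{\pi^k})^n v^{k-1} - (\mathcal{T}_{\mathrm{st}}^{\pi^k})^{n+1} v^{k-1}]$ collapses to $(I - \gamma P_{\pi^k})^{-1} b_{k-1}$ by the same cascade. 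The calculations are essentially algebraic; the only place I expect mild care is the telescoping step in the $d_{k+1}$ argument, where one must make sure the inserted $\mathcal{T}_{\mathrm{st}}^{\pi^{k+1}} v^k$ terms expand so that each $(\gamma P_{\pi^{k+1}})^j$ lands on the correct side with $b_k$, and that the sign conventions for $x_k$ and $y_k$ are consistent between the two recursions.
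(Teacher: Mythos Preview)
Your proposal is correct and follows essentially the same approach as the paper's own derivation (given in the proof of Lemma~\ref{lemma:inter-theorem}, which reproduces the Scherrer argument for the generalized operator): the same affine identity $(\mathcal{T}^{\pi})^j u - (\mathcal{T}^{\pi})^j u' = (\gamma P_{\pi})^j(u-u')$, the same use of the approximate-greedy inequality with test policies $\pi^k$ and $\pi_*$, and the same telescoping decompositions of $b_k$, $d_{k+1}$, and $s_k$. The only cosmetic difference is that the paper writes the three-term split for $d_{k+1}$ explicitly as $\mathcal{T}^{\pi_*} v_{\pi_*} - \mathcal{T}^{\pi_*} v^k + \mathcal{T}^{\pi_*} v^k - \mathcal{T}^{\pi^{k+1}} v^k + \mathcal{T}^{\pi^{k+1}} v^k - (\mathcal{T}^{\pi^{k+1}})^m v^k$, whereas you describe the same decomposition in words.
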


\begin{definition}[$\Gamma$-matrix \citep{scherrer2015approximate}]
    For a positive integer $n$, we define $\mathbb{P}_n$ as the smallest set of discounted transition kernels that are defined as (i) for any set of $n$ policies $\{\pi^i\}$, $(\gamma P^o_{\pi^1})(\gamma P^o_{\pi^2})\cdots(\gamma P^o_{\pi^n}) \in \mathbb{P}_n$, where $P^o_{\pi}(s, s') = P^o_{s, \pi(s)}(s')$. (ii) for any $\alpha \in (0, 1)$ and $(P_1, P_2) \in \mathbb{P}_n \times \mathbb{P}_n$, $\alpha P_1 + (1 - \alpha)P_2 \in \mathbb{P}_n$. With slight abuse of notation, $\Gamma^{n}$ is used to denote any element of $\mathbb{P}_n$.
    \label{def_discounted_kernel_set}
\end{definition}

\begin{proposition}[Lemma 4, \citet{scherrer2015approximate}]
    After $k$ iterations of approximate modified policy iteration scheme with standard bellman operator $\mathcal{T}^{\pi}_{\mathrm{st}}$, the losses $l_k \coloneqq v_* - v_{\pi^k}$ satisfy \begin{equation}
        l_k \le 2 \sum_{i = 1}^{k - 1} \sum_{j = i}^{\infty} \Gamma^{j}|\epsilon_{k - i}| + \sum_{i = 0}^{k - 1}\sum_{j = i}^{\infty} \Gamma^{j}|\epsilon'_{k - i}| + h(k)
    \end{equation} where $h(k) \coloneqq 2 \sum_{j = k}^{\infty} \Gamma^{j}|d_0|$ or $h(k) \coloneqq 2 \sum_{j = k}^{\infty} \Gamma^{j}|b_0|$. \label{tech_l1}
\end{proposition}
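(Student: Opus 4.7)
The plan is to decompose $l_k = d_k + s_k$ using the three auxiliary quantities introduced in Proposition~\ref{tech_inequality}, propagate each recursion back to iteration zero, and then bound the resulting affine combinations of $|x_\ell|$ and $|y_\ell|$ by the raw errors $|\epsilon|$ and $|\epsilon'|$. The first step is to iterate $b_k \le (\gamma P_{\pi^k})^m b_{k-1} + x_k$ componentwise on absolute values and absorb the policy-dependent products into the $\Gamma$-matrix shorthand of Definition~\ref{def_discounted_kernel_set}, which yields the building block
\begin{equation}
|b_k| \le \sum_{i=0}^{k-1} \Gamma^{mi} |x_{k-i}| + \Gamma^{mk} |b_0|.
\end{equation}

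Next I would unroll the $d$-recursion $d_{k+1} \le \gamma P_{\pi_*} d_k + y_k + \sum_{j=1}^{m-1}(\gamma P_{\pi^{k+1}})^j b_k$ in exactly the same manner, substituting in the bound on $|b_k|$ above. Combined with the closed form $s_k = (\gamma P_{\pi^k})^m (I - \gamma P_{\pi^k})^{-1} b_{k-1} = \sum_{j=m}^{\infty}(\gamma P_{\pi^k})^j b_{k-1}$, expanded via a Neumann series, this gives a double sum in which each $|y_\ell|$ and each $|x_\ell|$ appears with a coefficient that is a tail of $\Gamma^j$ matrices. The key observation is that the finite sum $\sum_{j=1}^{m-1}$ coming from the $d$-chain and the infinite tail $\sum_{j=m}^{\infty}$ coming from the $s$-chain join contiguously into a single tail $\sum_{j=\cdot}^{\infty}\Gamma^j$, so that in $l_k = d_k + s_k$ the net coefficient of $|x_\ell|$ from the $b$-channel is of this clean form starting at index $k-\ell$.

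At this point I would apply the componentwise bounds $|x_\ell| \le (I + \gamma P_{\pi^\ell})|\epsilon_\ell| + |\epsilon'_{\ell+1}|$ and $|y_\ell| \le \gamma P_{\pi_*}|\epsilon_\ell| + |\epsilon'_{\ell+1}|$, and reindex by $i = k - \ell$. The $|\epsilon'|$ contributions from the two channels combine without overlap into the single sum $\sum_{i=0}^{k-1}\sum_{j=i}^{\infty}\Gamma^j |\epsilon'_{k-i}|$. The $|\epsilon|$ contributions, in contrast, are counted twice --- once through $y$ in the $d$-chain and once through $x$ in the $b$-chain, the latter of which feeds both $d_k$ and $s_k$ --- and, using convexity of the $\Gamma^j$ class to merge the factors $I + \gamma P_{\pi^\ell}$ and $\gamma P_{\pi_*}$ into elements of $\Gamma^j$, the coefficient collapses to exactly $2\sum_{j=i}^{\infty}\Gamma^j$ on each $|\epsilon_{k-i}|$, with the summation index starting at $i=1$ because $\epsilon_k$ does not enter either recursion at step $k$. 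The boundary term is then either $2\sum_{j=k}^{\infty}\Gamma^j |b_0|$, collecting the residual $\Gamma^{mk}|b_0|$ factors propagated through both the $d$-chain and the $s$-chain, or alternatively $2\sum_{j=k}^{\infty}\Gamma^j |d_0|$ after trading $|b_0|$ for $|d_0|$ via the initial-residual relation.

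The main obstacle is the bookkeeping: tracking exactly which $\Gamma$-exponents appear and how the various tails from the $d$-chain, the $s$-chain, and the substituted $b$-chain combine into clean geometric tails. The technical device that makes this tractable is to work throughout in the $\Gamma^j$ shorthand instead of with explicit policy compositions; since convex combinations of $\Gamma^j$ matrices remain in $\Gamma^j$, parallel error paths can be merged without producing cross terms, and the coefficient $2$ on $|\epsilon_{k-i}|$ versus $1$ on $|\epsilon'_{k-i}|$ then falls out directly from counting how many times each raw error enters the decomposition $l_k = d_k + s_k$.
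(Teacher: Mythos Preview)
Your plan is correct and matches the route the paper itself follows for its analogous Lemma~\ref{lemma:inter-theorem} (iterate the $b$- and $d$-recursions in the $\Gamma^j$ shorthand, expand $s_k$ as a Neumann tail, combine $l_k=d_k+s_k$, and then pass to $|\epsilon|,|\epsilon'|$), after which the paper explicitly defers the remaining bookkeeping to \citet{scherrer2015approximate}. One small correction: the one-sided recursion only gives $b_k \le \sum_i \Gamma^{mi} x_{k-i} + \Gamma^{mk} b_0$, not the two-sided $|b_k|\le\ldots$ you wrote; this does no harm, since $b_k$ enters the $d$- and $s$-chains through nonnegative matrices and the one-sided upper bound is exactly what is needed, with absolute values applied only at the final step to $x_i$, $y_i$, and the boundary term.
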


\subsubsection{Proof pipelne of Theorem \ref{br_L_inf_error_bound}}

We first prove a technical result for our losses $l_k \coloneqq v_{\pi_t} - v_{\pi^k}$ that is similar to Proposition \ref{tech_l1}.

\begin{lemma} \label{lemma:inter-theorem}
    Under assumption \ref{assum_policy_realizability}, after $k$ iterations of scheme (\ref{ampi}), our losses satisfy \begin{equation}
        l_k \le 2 \sum_{i = 1}^{k - 1} \sum_{j = i}^{\infty} \Gamma^{j}|\epsilon_{k - i}| + \sum_{i = 0}^{k - 1}\sum_{j = i}^{\infty} \Gamma^{j}|\epsilon'_{k - i}| + h(k),
    \end{equation} where $h(k) \coloneqq 2 \sum_{j = k}^{\infty} \Gamma^{j}|l_0|$ or $h(k) \coloneqq 2 \sum_{j = k}^{\infty} \Gamma^{j}|b_0|$, with $b_0 = \mathcal{T}^{\pi^{1}}v^{0} - v^{0}$ that is related to the choice of the starting point. 
\end{lemma}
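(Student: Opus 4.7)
The plan is to reduce the claim to Scherrer et al.'s Lemma 4 (recalled as Proposition~\ref{tech_l1}) by checking that every algebraic ingredient used there carries over to our operator $\mathcal{T}^{\pi}$, with the role of the optimal value $v_{*}$ played by $v_{\pi_{t}}$, which is the unique fixed point of $\mathcal{T}^{\pi_{t}}$ guaranteed by the contraction part of Proposition~\ref{basic_properties_T}.

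The key observation is that, for each \emph{fixed} $\pi$, the operator $\mathcal{T}^{\pi}$ defined in \eqref{general_operator} is affine in its argument $v$: the two $\min_{P}$ terms depend only on $\pi$ (through $V^{\pi}_{r}$ and $V^{\pi}_{g}$), not on $v$, so they act as a constant vector. Hence we can write
\begin{equation}
[\mathcal{T}^{\pi} v](s) = \widetilde{r}^{\pi}(s) + \gamma \langle P^{o}_{s,\pi(s)}, v\rangle,
\end{equation}
for a suitable $\widetilde{r}^{\pi} \in \mathbb{R}^{\mathcal{S}}$. Thus $\mathcal{T}^{\pi}$ has the exact algebraic shape of a standard Bellman consistency operator with nominal kernel $P^{o}_{\pi}$ and reward $\widetilde{r}^{\pi}$. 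Every manipulation in Proposition~\ref{tech_inequality} (pushing $\mathcal{T}^{\pi}$ through inequalities, using $\mathcal{T}^{\pi} u - \mathcal{T}^{\pi} w = \gamma P^{o}_{\pi}(u - w)$, etc.) therefore applies verbatim, with $P^{o}_{\pi}$ replacing the usual $P_{\pi}$.

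With this in hand, I would define exactly the Scherrer et al.\ quantities in our notation: $d_{k} \coloneqq v_{\pi_{t}} - (\mathcal{T}^{\pi^{k}})^{m} v^{k-1}$, $s_{k} \coloneqq (\mathcal{T}^{\pi^{k}})^{m} v^{k-1} - v_{\pi^{k}}$, and $b_{k} \coloneqq v^{k} - \mathcal{T}^{\pi^{k+1}} v^{k}$, so that $l_{k} = d_{k} + s_{k}$. Then the three recursions of Proposition~\ref{tech_inequality} are re-derived using: (i) $\mathcal{T}^{\pi_{t}} v_{\pi_{t}} = v_{\pi_{t}}$ (fixed point); (ii) monotonicity of $\mathcal{T}^{\pi}$ from Proposition~\ref{basic_properties_T}; (iii) the approximate greedy inequality $\mathcal{T}^{\pi_{t}} v^{k} \le \mathcal{T}^{\pi^{k+1}} v^{k} + \epsilon'_{k+1}$ furnished by Assumption~\ref{assum_policy_realizability}, which yields the term $y_{k} = -\gamma P^{o}_{\pi_{t}} \epsilon_{k} + \epsilon'_{k+1}$ when one subtracts $v_{\pi_{t}}$; and (iv) the approximate evaluation identity $v^{k+1} = (\mathcal{T}^{\pi^{k+1}})^{m} v^{k} + \epsilon_{k+1}$, which produces $x_{k} = (I - \gamma P^{o}_{\pi^{k}})\epsilon_{k} + \epsilon'_{k+1}$. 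Unrolling the recursions and collecting the products of discounted kernels $\gamma P^{o}_{\pi^{j}}$ and $\gamma P^{o}_{\pi_{t}}$ into $\Gamma$-matrices (Definition~\ref{def_discounted_kernel_set}) gives the stated bound, with the freedom in $h(k)$ reflecting whether one stops the unrolling at $l_{0}$ or one step earlier at $b_{0} = \mathcal{T}^{\pi^{1}} v^{0} - v^{0}$.

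The main obstacle is essentially conceptual rather than technical: one must verify that treating $V^{\pi}_{r}$ and $V^{\pi}_{g}$ as constants inside $\mathcal{T}^{\pi}$ is legitimate for the purpose of policy-improvement arithmetic, and that the approximate greedy step from Assumption~\ref{assum_policy_realizability} holds \emph{pointwise} in $s$ against the target policy $\pi_{t}$, which is what is needed to absorb $\epsilon'_{k+1}$ into $y_{k}$ exactly as in the standard analysis. No new contraction or operator identity beyond Proposition~\ref{basic_properties_T} is required; the remainder is a direct transcription of the Scherrer et al.\ telescoping with $\pi_{t}$ in place of $\pi_{*}$ and $P^{o}_{\pi}$ in place of $P_{\pi}$.
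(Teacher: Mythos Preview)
Your proposal is correct and mirrors the paper's proof essentially step for step: the paper also defines $d_{k}, s_{k}, b_{k}$, exploits the identity $(\mathcal{T}^{\pi})^{m}v - (\mathcal{T}^{\pi})^{m}v' = (\gamma P^{o}_{\pi})^{m}(v-v')$ (your ``affine in $v$'' observation), re-derives the three Scherrer et al.\ recursions with $\pi_{t}$ in place of $\pi_{*}$ and $P^{o}_{\pi}$ in place of $P_{\pi}$, and then defers the telescoping to Proposition~\ref{tech_l1}. Your framing of the affine structure is, if anything, slightly more explicit than the paper's, but the argument is the same.
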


\begin{proof}
    Similar to the proofs in \citet{scherrer2015approximate} for Proposition \ref{tech_l1}, we first derive three relations that are identical to those in Proposition \ref{tech_inequality}, then apply these relations to bound our losses $l_k \coloneqq v_{\pi_t} - v_{\pi^k}$.
    
    To start with, we define 
    \begin{align}
        b_k \coloneqq v^k - \mathcal{T}^{\pi^{k + 1}} v^{k}, \quad s_k \coloneqq (\mathcal{T}^{\pi^{k}})^m v^{k - 1} - v_{\pi^k}, \quad d_k \coloneqq v_{\pi_t} - (\mathcal{T}^{\pi^{k}})^m v^{k - 1} .
    \end{align}

    \paragraph{Bounding $b_k$} With the definitions of $\epsilon_k$ and $\epsilon'_k$, and the property that \begin{equation}
        (\mathcal{T}^{\pi})^m v - (\mathcal{T}^{\pi})^m v' = (\gamma P^o_{\pi})^m(v - v'), \label{temp4}
    \end{equation}we have, \begin{align}
        b_k =&~ v^k - \mathcal{T}^{\pi^k} v^k + \mathcal{T}^{\pi^k} v^k - \mathcal{T}^{\pi^{k + 1}} v^k\\
        \overset{\mathrm{(i)}}{\le}&~ v^k - \mathcal{T}^{\pi^k} v^k + \epsilon'_{k + 1}\\
        =&~ v^k - \epsilon_k - \mathcal{T}^{\pi^k} v^k + P^o_{\pi^k}\epsilon_k + \epsilon_k - P^o_{\pi^k}\epsilon_k + \epsilon'_{k + 1}\\
        \overset{\mathrm{(ii)}}{=}&~ v^k - \epsilon_k - \mathcal{T}^{\pi^k}(v^k - \epsilon_k) + (I - \gamma P^o_{\pi^k})\epsilon_k + \epsilon'_{k + 1} \\
        =&~ v^k - \epsilon_k - \mathcal{T}^{\pi^k}(v^k - \epsilon_k) + x_k ,
    \end{align} where $x_k \coloneqq (I - \gamma P^o_{\pi^k})\epsilon_k + \epsilon'_{k + 1}$, inequality (i) comes from the definition of $\epsilon'_{k + 1}$ in (\ref{ampi}), equality (ii) comes from property (\ref{temp4}).
    \begin{align}
        b_k \le&~ v^k - \epsilon_k - \mathcal{T}^{\pi^k}(v^k - \epsilon_k) + x_k\\
        \overset{\mathrm{(i)}}{=}&~(\mathcal{T}^{\pi^k})^m v^{k - 1} - \mathcal{T}^{\pi^k} (\mathcal{T}^{\pi^k})^m v^{k - 1} + x_k\\
        =&~ (\mathcal{T}^{\pi^k})^m v^{k - 1} -  (\mathcal{T}^{\pi^k})^m (\mathcal{T}^{\pi^k} v^{k - 1}) + x_k\\
        \overset{\mathrm{(ii)}}{\le}&~ (\gamma P^o_{\pi^k})^m (v^{k - 1} - \mathcal{T}^{\pi^k} v^{k - 1}) + x_k \\
        =&~ (\gamma P^o_{\pi^k})^m b_{k - 1} + x_k,\label{eq_bk}
    \end{align} where equality (i) comes from the definition of $\epsilon$ in (\ref{ampi}), and equality (ii) comes from property (\ref{temp4}).
    
    \paragraph{Bounding $s_k$} With the property that $\forall v, v_{\pi^k} = (\mathcal{T}^{\pi^k})^{\infty} v$, we have \begin{align}
        s_k =&~ (\mathcal{T}^{\pi^{k}})^m v^{k - 1} - (\mathcal{T}^{\pi^k})^{\infty} v^{k - 1}\\
        =&~ (\gamma P^o_{\pi^k})^m(I - \gamma P^o_{\pi^k})^{-1} b_{k - 1}, \label{eq_sk}
    \end{align} where the last equality comes from property (\ref{temp4}).

    \paragraph{Bounding $d_k$} Define $y_k \coloneqq -\gamma P^o_{\pi_t}\epsilon_k + \epsilon'_{k + 1}$, then \begin{align}
        d_{k + 1} = &~ v_{\pi_t} - (\mathcal{T}^{\pi^{k + 1}})^m v^{k}\\
        \overset{\mathrm{(i)}}{=}&~ \mathcal{T}^{\pi_t} v_{\pi_t} - \mathcal{T}^{\pi_t} v^k + \mathcal{T}^{\pi_t} v^k - \mathcal{T}^{\pi^{k + 1}} v^k + \mathcal{T}^{\pi^{k + 1}} v^k - (\mathcal{T}^{\pi^{k + 1}})^m v^{k} \\
        \overset{\mathrm{(i)}}{\le}&~ \gamma P^o_{\pi_t}(v_{\pi_t} - v_k) + \epsilon'_{k + 1}  + \mathcal{T}^{\pi^{k + 1}} v^k - (\mathcal{T}^{\pi^{k + 1}})^m v^{k}\\
        =&~ \gamma P^o_{\pi_t}(v_{\pi_t} - v_k) + \gamma P^o_{\pi_t} \epsilon_k - \gamma P^o_{\pi_t} \epsilon_k + \epsilon'_{k + 1}  + \mathcal{T}^{\pi^{k + 1}} v^k - (\mathcal{T}^{\pi^{k + 1}})^m v^{k} \\
        =&~ \gamma P^o_{\pi_t} d_k + y_k + \mathcal{T}^{\pi^{k + 1}} v^k - (\mathcal{T}^{\pi^{k + 1}})^m v^{k}\\
        \overset{\mathrm{(iii)}}{=}&~ \gamma P^o_{\pi_t} d_k + y_k + \sum_{j = 1}^{m - 1}(\gamma P^o_{\pi^{k + 1}})^j b_k, \label{eq_dk}
    \end{align} where equality (i) comes from $v_{\pi_t} = \mathcal{T}^{\pi_t} v_{\pi_t}$, inequality (ii) comes from $\mathcal{T}^{\pi_t} v_{\pi_t} - \mathcal{T}^{\pi_t} v^k = \gamma P^o_{\pi_t}(v_{\pi_t} - v_k)$ and the definition of $\epsilon'_{k + 1}$, and equality (iii) comes from iteratively applying property of $(\mathcal{T}^{\pi})^j v - (\mathcal{T}^{\pi})^j v' = (\gamma P^o_{\pi})^j(v - v')$.

    After bounding all three elements, using the notation introduced in Definition \ref{def_discounted_kernel_set}, we may rewrite (\ref{eq_sk}) as \begin{equation}
        s_k = \Gamma^{m} \sum_{j = 0}^{\infty} \Gamma^j b_{k - 1},
    \end{equation} 
    And by induction from Eq.(\ref{eq_bk}) and (\ref{eq_dk}), we obtain \begin{align}
        b_k &\le \sum_{i = 1}^k \Gamma^{m(k - i)} x_i + \Gamma^{mk} b_0,\\
        d_k \le \sum_{j = 1}^{k - 1} &\Gamma^{k - 1 - j} (y_j + \sum_{l = 1}^{m - 1} \Gamma^{l} b_j) + \Gamma^{k}d_0.
    \end{align}
    
    Combine with the fact of $l_k = s_k + d_k$, now we recover exactl the same expression of $s_k$, $b_k$, and $d_k$ as in the proof of proposition \ref{tech_l1}. All the rest are therefore standard as in theirs except for the relation between $b_0$ and $d_0$, \begin{align}
        b_0 =&~ v^0 - \mathcal{T}^{\pi^1} v^0 \\
        =&~ v^0 - v_{\pi_t} + \mathcal{T}^{\pi_t}v_{\pi_t} - \mathcal{\pi_t} v^0 + \mathcal{\pi_t} v^0 - \mathcal{T}^{\pi^1} v^0 \\
        \le&~ (I - \gamma P^o_{\pi_t})(-d_0) + \epsilon'_{1},
    \end{align} where the fact of $\epsilon_0 = 0$ is used and we again recover the same relation. 
\end{proof}

Finally we finish the proof of theorem \ref{br_L_inf_error_bound} by noticing that $\Gamma^{n} \le \gamma^{n}$ in Definition \ref{def_discounted_kernel_set}. Therefore, we have \begin{align}
    l_k \le&~ 2 \sum_{i = 1}^{k - 1} \sum_{j = i}^{\infty} \Gamma^{j}|\epsilon_{k - i}| + \sum_{i = 0}^{k - 1}\sum_{j = i}^{\infty} \Gamma^{j}|\epsilon'_{k - i}| + h(k)\\
    \le&~ \frac{2\bar{\epsilon}(\gamma-\gamma^k) + \bar{\epsilon'} (1-\gamma^k)}{(1-\gamma)^2} + O(\gamma^k) \\
    \xrightarrow{k \to \infty}&~ \frac{2\bar{\epsilon}\gamma+\bar{\epsilon'}}{(1-\gamma)^2}.
\end{align}
\qed

\section{Derivation of Section \ref{sec:contamination} : DRC-RL with R-Contamination Uncertainty Sets}
\label{sec:derivation_contamination}

There exist a variety class of operators of our proposed operator $\mathcal{T}^{\pi}$ in (\ref{general_operator}), that keep properties as in Proposition \ref{basic_properties_T}. We present one of them and upon which derive the exact consistency operator (\ref{contamination_operator}) used for R-contamination uncertainty set in Section \ref{secy:contamination}. 

Formally, $\forall \lambda_t \in \mathbb{R}_{+}, s \in \mathcal{S}, v \in \RSt$, we define a class of consistency operators $[\mathcal{T}_{1}^{\pi}v](s) \coloneqq [\mathcal{T}^{\pi}v](s) - \gamma h\langle P^o_{s, \pi(s)}, v - V^{\pi}_r + \lambda_t^{\top} V^{\pi}_g\rangle$ controlled by coefficient $h \in [0, 1)$. We use $P^o_{\pi}$ to denote vector $[P^o_{s, \pi(s)}]^{\top}_{s \in \mathcal{S}}$.

\begin{proposition}
    The consistency operator $\mathcal{T}_{1}^{\pi}$ satisfies the following :\begin{enumerate}
        \item[(1)] Monotonicity: let $v_1, v_2 \in \mathbb{R}^{\mathcal{S}}$ such that $v_1 \ge v_2$, then $\mathcal{T}_1^{\pi} v_1 \ge \mathcal{T}_1^{\pi} v_2$. 
        \item[(2)] Transition Invariance: for any $c \in \mathbb{R}$, we have $\mathcal{T}_1^{\pi} (v + c\textbf{1}) = \mathcal{T}_1^{\pi} v + \gamma(1 - h) c\textbf{1}$.
        \item[(3)] Contraction: The operator $\mathcal{T}_1^{\pi}$ is a $\gamma(1 - h)$-contraction. Further, $V^{\pi}_{r} - \lambda_t^{\top}V^{\pi}_{g}$ is the unique stationary point. 
    \end{enumerate} 
    \label{prop_contamination_operator}
\end{proposition}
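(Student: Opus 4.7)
\textbf{Proof Plan for Proposition \ref{prop_contamination_operator}.}

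The plan is to exploit the fact that the correction term $-\gamma h \langle P^o_{s,\pi(s)}, v - V^{\pi}_r + \lambda_t^{\top} V^{\pi}_g\rangle$ added to $\mathcal{T}^{\pi}$ is \emph{affine in $v$} and, crucially, the subtracted constant $V^{\pi}_r - \lambda_t^{\top} V^{\pi}_g$ is fixed once $\pi$ and $\lambda_t$ are fixed. Substituting the definition of $\mathcal{T}^{\pi}$ from \eqref{general_operator}, I would first rewrite
\begin{align*}
[\mathcal{T}_1^{\pi} v](s) &= (r-\lambda_t^{\top}g)(s,\pi(s)) + \gamma(1-h)\langle P^o_{s,\pi(s)}, v\rangle \\
&\quad + \gamma h \langle P^o_{s,\pi(s)}, V^{\pi}_r - \lambda_t^{\top} V^{\pi}_g\rangle \\
&\quad + \gamma \min_{P\in\mathcal{P}}\langle P_{s,\pi(s)}-P^o_{s,\pi(s)}, V^{\pi}_r\rangle - \gamma\lambda_t^{\top}\min_{P\in\mathcal{P}}\langle P_{s,\pi(s)}-P^o_{s,\pi(s)}, V^{\pi}_g\rangle,
\end{align*}
so that the only $v$-dependent term is $\gamma(1-h)\langle P^o_{s,\pi(s)}, v\rangle$ and everything else is a constant (depending on $\pi,\lambda_t$ but not on $v$). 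All three properties then fall out of this decomposition.

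First, for monotonicity, taking $v_1\ge v_2$ componentwise gives $[\mathcal{T}_1^{\pi} v_1](s) - [\mathcal{T}_1^{\pi} v_2](s) = \gamma(1-h)\langle P^o_{s,\pi(s)}, v_1 - v_2\rangle \ge 0$, since $P^o_{s,\pi(s)}$ is a probability distribution and $\gamma(1-h)>0$. For transition invariance, applying the same decomposition to $v + c\mathbf{1}$ and using $\langle P^o_{s,\pi(s)},\mathbf{1}\rangle = 1$ yields $[\mathcal{T}_1^{\pi}(v + c\mathbf{1})](s) = [\mathcal{T}_1^{\pi} v](s) + \gamma(1-h) c$ for every $s$, which is the claimed vector identity. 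For the contraction, I would bound $|[\mathcal{T}_1^{\pi} v_1](s) - [\mathcal{T}_1^{\pi} v_2](s)| = \gamma(1-h)|\langle P^o_{s,\pi(s)}, v_1 - v_2\rangle| \le \gamma(1-h)\|v_1-v_2\|_{\infty}$ using that $P^o_{s,\pi(s)}$ has unit $\ell_1$-norm; taking the supremum over $s$ gives the $\gamma(1-h)$-contraction in $\|\cdot\|_{\infty}$.

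Finally, to identify the fixed point, I would substitute $v = V^{\pi}_r - \lambda_t^{\top} V^{\pi}_g$ into the definition of $\mathcal{T}_1^{\pi}$. The correction term becomes $-\gamma h\langle P^o_{s,\pi(s)}, 0\rangle = 0$, so $\mathcal{T}_1^{\pi}(V^{\pi}_r - \lambda_t^{\top} V^{\pi}_g) = \mathcal{T}^{\pi}(V^{\pi}_r - \lambda_t^{\top} V^{\pi}_g) = V^{\pi}_r - \lambda_t^{\top} V^{\pi}_g$ by Proposition \ref{basic_properties_T}(3). Uniqueness is immediate from the Banach fixed-point theorem applied to the $\gamma(1-h)$-contraction on the complete metric space $(\mathbb{R}^{\mathcal{S}}, \|\cdot\|_{\infty})$.

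I do not expect any step to be a real obstacle: the design of the correction term was tailored so that it vanishes at the desired fixed point while absorbing a fraction $h$ of the contraction-breaking freedom in the $v$-dependent coefficient. The only mildly delicate point is remembering that $V^{\pi}_r$ and $V^{\pi}_g$ are constants (independent of $v$) for fixed $\pi$, which is what allows monotonicity and contraction to reduce to properties of the single linear term $\gamma(1-h)\langle P^o_{s,\pi(s)},v\rangle$.
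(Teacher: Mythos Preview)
Your proposal is correct and follows essentially the same approach as the paper's proof: both isolate the single $v$-dependent term $\gamma(1-h)\langle P^o_{s,\pi(s)},v\rangle$ to obtain monotonicity, transition invariance, and the $\gamma(1-h)$-contraction directly, and both verify the fixed point by observing that the correction term vanishes at $V^{\pi}_r - \lambda_t^{\top}V^{\pi}_g$, reducing to $\mathcal{T}^{\pi}$ and invoking Proposition~\ref{basic_properties_T}. Your explicit decomposition at the outset is a minor presentational difference only.
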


\begin{proof}
    \textbf{Monotonicity}. By the definition of $\mathcal{T}_1^{\pi}$ , we have $\forall v_1, v_2 \in \RSt$ such that $v_1 \ge v_2$, \begin{equation}
        \mathcal{T}_1^{\pi} v_1 - \mathcal{T}_1^{\pi} v_2 = \gamma(1 - h) \langle P^o_{\pi}, v_1 - v_2 \rangle \ge 0.
    \end{equation}
    \textbf{Transition Invariance}. By the definition of $\mathcal{T}_1^{\pi}$, we have $\forall v_1 \in \RSt, c \in \mathbb{R}$ \begin{equation}
        \mathcal{T}_1^{\pi} (v_1 + c\mathbf{1}) = \mathcal{T}_1^{\pi} v_1 + \gamma(1 - h) \langle P^o_{\pi}, c\mathbf{1} \rangle = \mathcal{T}_1^{\pi} v_1 + \gamma(1 - h) c\mathbf{1}.
    \end{equation}
    \textbf{Contraction}. We first show that $\mathcal{T}_{1}^{\pi}$ is a $\gamma (1 - h)$ contraction by its definition, that we have $\forall v_1, v_2 \in \RSt, s \in \mathcal{S}$, \begin{equation}
        |[\mathcal{T}_1^{\pi} v_1](s) - [\mathcal{T}_1^{\pi} v_2](s)| = |\gamma(1 - h) \langle P^0_{s, \pi(s)}, v_1 - v_2 \rangle|
        \le \gamma(1 - h) \Vert v_1 - v_2\Vert_{\infty}.
    \end{equation}
    Finally, we apply $\mathcal{T}_{1}^{\pi}$ on $V^{\pi}_{r} - \lambda_t^{\top}V^{\pi}_{g}$, \begin{align}
        [\mathcal{T}_{1}^{\pi} (V^{\pi}_{r} - \lambda_t^{\top}V^{\pi}_{g})](s) =&~ [\mathcal{T}^{\pi} (V^{\pi}_{r} - \lambda_t^{\top}V^{\pi}_{g})](s) - \gamma h\langle P^o_{s, \pi(s)}, V^{\pi}_{r} - \lambda_t^{\top}V^{\pi}_{g} - V^{\pi}_r + \lambda_t^{\top} V^{\pi}_g\rangle\\
        =&~ [\mathcal{T}^{\pi} (V^{\pi}_{r} - \lambda_t^{\top}V^{\pi}_{g})](s) \\
        =&~ V^{\pi}_{r}(s) - \lambda_t^{\top}V^{\pi}_{g}(s),
    \end{align} where the first equality comes from the definition of $\mathcal{T}_{1}^{\pi}$, the last equality comes from the contraction property of $\mathcal{T}^{\pi}$ in Proposition \ref{basic_properties_T}.
\end{proof}

By setting $h = \beta$ and specifying the uncertainty set as R-contamination uncertainty set $\mathcal{P}_{s, a} = \{(1 - \beta)P^o_{s, a} + \beta q \,|\, q \in \triangle(\mathcal{S})\}$ with a scalar robust level $\beta \in \mathbb{R}$, we have for $\mathcal{T}_1^{\pi}$ defined in Proposition \ref{prop_contamination_operator} and $s \in \mathcal{S}, v \in \RSt$, \begin{align}
    [\mathcal{T}_{1}^{\pi}v](s) =&~ [\mathcal{T}^{\pi}v](s) - \gamma \beta\langle P^o_{s, \pi(s)}, v - V^{\pi}_r + \lambda_t^{\top} V^{\pi}_g \rangle\\
    \overset{\mathrm{(i)}}{=}&~ (r - \lambda_t^{\top}g)(s, \pi(s)) + \gamma \langle P^o_{s, \pi(s)}, v \rangle +  \gamma \min_{P \in \mathcal{P}} \langle P_{s, \pi(s)} - P^o_{s, \pi(s)}, V^{\pi}_{r} \rangle  \\
    &~- \gamma \lambda_t^{\top} \min_{P \in \mathcal{P}} \langle P_{s, \pi(s)} - P^o_{s, \pi(s)}, V^{\pi}_{g} \rangle - \gamma \beta\langle P^o_{s, \pi(s)}, v - V^{\pi}_r + \lambda_t^{\top} V^{\pi}_g\rangle \nonumber\\
    \overset{\mathrm{(ii)}}{=}&~ (r - \lambda_t^{\top}g)(s, \pi(s)) + \gamma \langle P^o_{s, \pi(s)}, v \rangle +  \gamma \min_{q \in \Delta(\cS)} \langle \beta(q - P^o_{s, \pi(s)}), V^{\pi}_{r} \rangle   \\
    &~- \gamma \lambda_t^{\top} \min_{q \in \Delta(\cS)} \langle \beta(q - P^o_{s, \pi(s)}), V^{\pi}_{g} \rangle - \gamma \beta\langle P^o_{s, \pi(s)}, v - V^{\pi}_r + \lambda_t^{\top} V^{\pi}_g\rangle \nonumber\\
    =&~ (r - \lambda_t^{\top}g)(s, \pi(s)) + \gamma(1 - \beta) \langle P^o_{s, \pi(s)}, v \rangle + \gamma \beta (\min_{q \in \Delta(\cS)} \langle q, V^{\pi}_{r} \rangle - \lambda_t^{\top} \min_{q \in \Delta(\cS)} \langle q, V^{\pi}_{g} \rangle ) \\
    =&~ (r - \lambda_t^{\top}g)(s, \pi(s)) + \gamma(1 - \beta) \langle P^o_{s, \pi(s)}, v \rangle + \gamma \beta (\min_{s'} V^{\pi}_r(s') - \lambda_t^{\top} \min_{s'}V^{\pi}_{g}(s')). 
\end{align} where equality (i) comes from the definition of consistency operator $\mathcal{T}^{\pi}$ in (\ref{general_operator}), and equality (ii) comes from the definition of R-contamination uncertainty set.

Now we have obtained the same consistency operator as in (\ref{contamination_operator}) that is derived from proposed consistency operator $\mathcal{T}^{\pi}$ in (\ref{general_operator}), so we use the same notation as $\mathcal{T}^{\pi}$ in the main text for brevity sake (and one can certainly proof everything in Section \ref{sec:framework} as they mostly depend on these three properties, but it is no need to do that as following).

In the following content of Section \ref{sec:contamination}, we will further simplify to obtain a standard consistency operator, thus the convergence result is standard from \citet{scherrer2015approximate}, such as in Proposition \ref{tech_l1}.

\section{Proof of Section~\ref{sec:inherent}: On the Intractability of DRC-RL}
\label{appendix:proof_5}

\subsection{Proof  of Lemma \ref{eq_linear_greedy}}

For brevity, we denote policy as $\pi \in \Delta(\mathcal{A})^{\mathcal{S}} \subseteq \RAS$ which includes both the class of stochastic and deterministic policies. We therefore use $\pi[s, a]$ to denote the probability of action $a$ given state $s$, and use $\pi(s)$ to denote the whole probability simplex given state $s$.

When an operator is linear, by the definition of linearity, there exists a function $f : \mathbb{R}^{|\mathcal{S}|} \to \mathbb{R}^{|\mathcal{S}||\mathcal{A}|}$, such that for every value function $v \in \mathbb{R}^{|\mathcal{S}|}$ and state $s \in \mathcal{S}$, \begin{align}
    \max_{\pi \in \Pi} [\mathcal{T}^{\pi}v](s) =&~ \max_{\pi \in \Pi} \langle \pi, f(v)\rangle_s\\
    =&~ \max_{\pi \in \Pi} \langle \pi[s, \cdot], f(v)[s, \cdot]\rangle\\
    =&~ \max_{\pi(s) \in \Delta(\mathcal{A})} \langle \pi(s), f(v)[s, \cdot] \rangle\\
    =&~ \max_{a} f(v)[s, a]
\end{align}
The greedy policy can be set as $g(v, a)_s = f(v)[s, a]$, which completes the proof for the first part. 

When an operator $\mathcal{T}^{\pi}$ enables the greedy policy, by its definition, there exists a function $g : \mathbb{R}^{|\mathcal{S}|} \times \mathcal{A} \to \mathbb{R}^{|\mathcal{S}|}$ such that $\forall v \in \mathbb{R}^{|\mathcal{S}|}, s \in \mathcal{S}$, $\max_{\pi \in \Pi} [\mathcal{T}^{\pi}v](s) = \max_{a \in \mathcal{A}}g(v, a)_{s}$. Therefore, when defining the linear operator $\mathcal{T}^{\pi}_{linear} \coloneqq \langle \pi(s), g(v, \cdot) \rangle$, we complete the proof by showing:  \begin{align}
    \max_{\pi \in \Pi} [\mathcal{T}^{\pi}v](s) =&~ \max_{a \in \mathcal{A}}g(v, a)_{s} \\
    =&~ \max_{\pi(s) \in \Delta(\mathcal{A})} \langle \pi(s), g(v, \cdot) \rangle\\
    =&~ \max_{\pi \in \Pi} \langle \pi(s), g(v, \cdot) \rangle\\
    =&~ \max_{\pi \in \Pi} [\mathcal{T}^{\pi}_{linear}v](s).
\end{align} 
\qed

\subsection{Proof for Theorem \ref{impossible_thm}}

\label{proof:inherent}


Throughout this proof,  for convenience, we denote $v^{\pi} \coloneqq V^{\pi}_r - \lambda_t^T V^{\pi}_g$ for any policy $\pi\in\Pi$ and any $\lambda_t$. 

To start with, we suppose that there exists a consistency operator denoted as $\mathcal{T}'$ that satisfies both conditions in Theorem \ref{impossible_thm}
for all policy $\pi$, namely, $\forall s \in \St, \pi_1, \pi_2 \in \Pi$, we have: \begin{align}
    |[\mathcal{T}'^{\pi_1} v^{\pi_1}](s) - [\mathcal{T}'^{\pi_2} v^{\pi_2}](s)| =&~ |[\mathcal{T}'^{\pi_1} v^{\pi_1}](s) - [\mathcal{T}'^{\pi_1} v^{\pi_2}](s) + [\mathcal{T}'^{\pi_1} v^{\pi_2}](s) - [\mathcal{T}'^{\pi_2} v^{\pi_2}](s)|\\
    \overset{\mathrm{(i)}}{\le}&~ |[\mathcal{T}'^{\pi_1} v^{\pi_1}](s) - [\mathcal{T}'^{\pi_1} v^{\pi_2}](s)| + |[\mathcal{T}'^{\pi_1} v^{\pi_2}](s) - [\mathcal{T}'^{\pi_2} v^{\pi_2}](s)|\\
    \overset{\mathrm{(ii)}}{\le}&~ \gamma \left\Vert v^{\pi_1} - v^{\pi_2} \right\Vert_{\infty} + |\left\langle \pi_1(s) - \pi_2(s), f(v^{\pi_2})[s, \cdot]\right\rangle|, \label{eq:key-inequality-drc}
\end{align} where inequality (i) uses the triangle inequality, inequality (ii) comes from the contraction property in Proposition \ref{impossible_thm} and the linearity property defined in definition \ref{linearity_definition}.

Moreover, as $v^{\pi}$ is the stationary point of consistency operator $\mathcal{T}'^{\pi}$, we arrive at an inequality that is independent of any operator,\begin{align}
    |v^{\pi_1}(s) - v^{\pi_2}(s)| 
    =&~ |[\mathcal{T}'^{\pi_1} v^{\pi_1}](s) - [\mathcal{T}'^{\pi_2} v^{\pi_2}](s)| \\
    \le&~\gamma \left\Vert v^{\pi_1} - v^{\pi_2} \right\Vert_{\infty} + |\left\langle \pi_1(s) - \pi_2(s), f(v^{\pi_2})[s, \cdot]\right\rangle|,\label{contradiction_ineq}
\end{align}
where the last inequality holds by applying \eqref{eq:key-inequality-drc}.

For the rest of the proof, we will construct an example that contradicts with (\ref{contradiction_ineq}), which shows that there is no such operator $\cT'$ that satisfies two conditions introduced in Theorem \ref{impossible_thm}.

 \begin{figure}[!h]
     \centering
     \includegraphics[width = .6\textwidth]{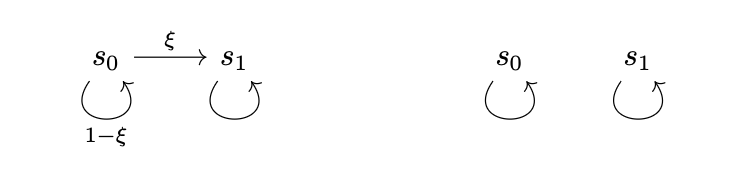}
     \caption{A two states, two actions Markov decision process used in example \ref{example} : the left and the right figures present the transition probabilities for actions $a_0$ and $a_1$.}
     \label{fig:enter-label}
 \end{figure}
 \begin{example}
    \label{example}
    Consider Markov decision process as in figure \ref{fig:enter-label}, where state space $\St = \{s_0, s_1\}$, action space $\Ac = \{a_0, a_1\}$, discount factor $\gamma = 0.95$. The transition kernel for state $s_0$ and action $a_0$ is parameterized by $\xi$ that indicates the probability to state $s_1$, i.e. $\mathcal{P}_{s_0, a_0} = \{[1-\xi, \xi] | \xi \in [0.9, 1]\}$. For other state action pairs, the only consequence is staye at the current state, i.e. $\mathcal{P}_{s_1, a_0} = \mathcal{P}_{s_1, a_1} = \{[0, 1]\}, \mathcal{P}_{s_0, a_1} = \{[1, 0]\}$. 
    
    We assume there is one additional constraint, and design the rewards $r$ and $g$ for the objective and the constraint respectively,
    \begin{equation}
        r_{s, a} = \begin{cases}
            1, & \text{ if } s = s_0, a = a_0\\
            0, & \text{ if }s = s_0, a = a_1\\
            1, & \text{ if }s = s_1, a = a_0\\
            2, & \text{ if } s = s_1, a = a_1
        \end{cases} 
        ~~~~
        g_{s, a} = \begin{cases}
            1, & \text{ if }s = s_0, a = a_0\\
            0, & \text{ if }s = s_0, a = a_1\\
            0, & \text{ if }s = s_1, a = a_0\\
            1, & \text{ if } s = s_1, a = a_1
        \end{cases}
    \end{equation}
    The derived inequality (\ref{contradiction_ineq}) should hold for any choice of policies $\pi_1$ and $\pi_2$, we thus choose \begin{equation}
        \pi_1(s) = \begin{cases}
            a_0, & \text{ if }s = s_0\\
            a_0, & \text{ if } s = s_1
        \end{cases}
        ~~~~
        \pi_2(s) = \begin{cases}
            a_0, & \text{ if } s = s_0\\
            a_1, & \text{ if } s = s_1
        \end{cases}
    \end{equation}
    
    The derived inequality (\ref{contradiction_ineq}) then becomes, \begin{align}
        |v^{\pi_1}(s_0) - v^{\pi_2}(s_0)|  \le&~ \gamma \left\Vert v^{\pi_1} - v^{\pi_2} \right\Vert_{\infty} + |\left\langle \pi_1(s_0) - \pi_2(s_0), f(v^{\pi_2})[s_0, \cdot]\right\rangle| \\
        =&~ \gamma \left\Vert v^{\pi_1} - v^{\pi_2} \right\Vert_{\infty} \label{contradiction_simplify}
    \end{align} where the second equation comes from the fact that $\pi_1(s_0) - \pi_2(s_0) = 0$.

    One can then calculate the robust value function $V^{\pi_1}_r$, 
    \begin{align}
        V^{\pi_1}_r(s_1) =&~ \frac{r_{s_1, \pi_1(s_1)}}{1 - \gamma} = 20 \\
        V^{\pi_1}_r(s_0) =&~ [\mathcal{T}^{\pi_1}_{r, \mathrm{rob}} V^{\pi_1}_r](s_0)\\
        =&~ r_{s_0, \pi_1(s_0)} + \gamma \min_{P \in \mathcal{P}_{s_0, \pi_1(s_0)}} \langle P, V^{\pi_1}_r \rangle \\
        =&~ 1 + \gamma \min_{\xi \in [0.9, 1]} \{(1 - \xi) V^{\pi_1}_r(s_0) + \xi V^{\pi_1}_r(s_1)\} \\
        =&~ 1 + \gamma \min_{\xi \in [0.9, 1]} \{(1 - \xi) V^{\pi_1}_r(s_0) + 20\xi\} \label{temp1}\\
        =&~ 1 + \gamma \begin{cases}
            20, &V^{\pi_1}_r(s_0) \ge 20 \\
            0.1V^{\pi_1}_r(s_0) + 18, &V^{\pi_1}_r(s_0) < 20
        \end{cases} \label{temp2}
    \end{align} where the first two equations in calculating $V^{\pi_1}_r(s_0)$ comes from the fact that $V^{\pi_1}_r$ is the stationary point of consistency operator $\mathcal{T}^{\pi_1}_{r, \mathrm{rob}}$ in (\ref{dr_operator}), and the last equation illustrates the solution of the inner minimization problem in (\ref{temp1}).

    One can easily calculate both cases in (\ref{temp2}) and find that $V^{\pi_1}_r(s_0) = 20$. Similarly, we can calculate other robust value functions, \begin{align}
        V^{\pi_1}_g(s_1) =&~ \frac{g_{s_1, \pi_1(s_1)}}{1 - \gamma} = 0\\
        V^{\pi_1}_g(s_0) =&~ g_{s_0, \pi_1(s_0)} + \gamma \min_{\xi \in [0.9, 1]} \{(1 - \xi) V^{\pi_1}_g(s_0) + \xi V^{\pi_1}_g(s_1)\} = 1\\
        V^{\pi_2}_r(s_1) =&~ \frac{r_{s_1, \pi_2(s_1)}}{1 - \gamma} = 40\\
        V^{\pi_2}_r(s_0) =&~ r_{s_0, \pi_2(s_0)} + \gamma \min_{\xi \in [0.9, 1]} \{(1 - \xi) V^{\pi_2}_r(s_0) + \xi V^{\pi_2}_r(s_1)\} \\
        =&~ 0 + \gamma \begin{cases}
            40, &V^{\pi_1}_r(s_0) \ge 40 \\
            0.1V^{\pi_1}_r(s_0) + 36, &V^{\pi_1}_r(s_0) < 40
        \end{cases} \\
        =&~ \frac{6840}{181} \approx 37.79 \\
        V^{\pi_2}_g(s_1) =&~ \frac{g_{s_1, \pi_2(s_1)}}{1 - \gamma} = 20\\
        V^{\pi_2}_g(s_0) =&~ g_{s_0, \pi_2(s_0)} + \gamma \min_{\xi \in [0.9, 1]} \{(1 - \xi) V^{\pi_2}_g(s_0) + \xi V^{\pi_2}_g(s_1)\} = 20
    \end{align}
    Therefore, we have 
    \begin{align}
        v^{\pi_1} = V^{\pi_1}_r - \lambda_t V^{\pi_1}_g = \left[ 
            \begin{array}{c}
                20 \\
                20 
            \end{array}
        \right] - \lambda_t \left[ 
            \begin{array}{c}
                1 \\
                0 
            \end{array}
        \right] = \left[ 
            \begin{array}{c}
                20 - \lambda_t \\
                20 
            \end{array}
        \right]\\
        v^{\pi_2} = V^{\pi_2}_r - \lambda_t V^{\pi_2}_g = \left[ 
            \begin{array}{c}
                \frac{6840}{181} \\
                40 
            \end{array}
        \right] - \lambda_t \left[ 
            \begin{array}{c}
                20 \\
                20 
            \end{array}
        \right] = \left[ 
            \begin{array}{c}
                \frac{6840}{181} - 20 \lambda_t \\
                40 - 20 \lambda_t
            \end{array}
        \right]
    \end{align}
    Finally, for every $\lambda_t \in [0.969, 2.209]$, we have \begin{equation}
        |v^{\pi_1}(s_0) - v^{\pi_2}(s_0)| = \left\Vert v^{\pi_1} - v^{\pi_2} \right\Vert_{\infty} > \gamma \left\Vert v^{\pi_1} - v^{\pi_2} \right\Vert_{\infty}
    \end{equation} which gives a clear contradiction to derived inequality (\ref{contradiction_simplify}).
    
\end{example}
\qed

\begin{remark}
    The simplified derived inequality (\ref{contradiction_simplify}) is true to the non-robust or non-constrained counterparts of DRC-RL.
\end{remark}


\section{Additional Discussions}

\subsection{Discussions about Assumption \ref{assum_policy_realizability}}
\label{appendix:1}

We start from the definitions of our consistency operator $\mathcal{T}^{\pi}$ in (\ref{general_operator}) and the corresponding optimality operator $\mathcal{T}^{*}$, \begin{align}
     [\mathcal{T}^{\pi} v](s) =&~ (r - \lambda_t^{\top}g)(s, \pi(s)) + \gamma \langle P^o_{s, \pi(s)}, v \rangle +  \gamma \min_{P \in \mathcal{P}} \langle P_{s, \pi(s)} - P^o_{s, \pi(s)}, V^{\pi}_{r} \rangle  \label{general_operator} \nonumber \\
    &~- \gamma \lambda_t^{\top} \min_{P \in \mathcal{P}} \langle P_{s, \pi(s)} - P^o_{s, \pi(s)}, V^{\pi}_{g} \rangle \nonumber\\
    [\mathcal{T}^{*} v](s) =&~ \max_{\pi \in \Pi} [\mathcal{T}^{\pi} v](s) \nonumber
\end{align}

Given that consistency operator $\mathcal{T}^{\pi}$ does not have linearity defined in definition \ref{linearity_definition}, the optimality operator $\mathcal{T}^{*}$ cannot take greedy policy as proved in lemma \ref{eq_linear_greedy}. Further, for a fixed value function $v$ and different states $s$ and $s'$, there might not exist a single policy that can be returned by the optimality operator $\mathcal{T}^{*}$ acting on both states, i.e. maximize $\mathcal{T}^{\pi} v(s)$ and $\mathcal{T}^{\pi} v(s')$ at the same time. Therefore, assumption \ref{assum_policy_realizability} for the policy improvement step actually consists of two statements : (1) There exists a policy that can be $\epsilon'$-approximate to the optimality operator. (2) There exists a solver that efficiently finds this policy.

\textbf{In this discussion, we will introduce a modified operator that guarantees the existence of a $\epsilon'$-approximate policy to the optimality operator. Then further build the related AMPI scheme to solve distributionally robust constrained RL.}. To start with, we first denote $v(\mu) \coloneqq \langle v, \mu \rangle$ for any vector $v \in \RSt$. We then define the $\mu$-consistency operator $\mathcal{T}^{\pi}_{\mu} : \mathbb{R} \to \mathbb{R}$, \begin{equation}
    \mathcal{T}^{\pi}_{\mu} v(\mu) = [\mathcal{T}^{\pi} v](\mu)
\end{equation} and its corresponding $\mu$-optimality operator is, \begin{equation}
    \mathcal{T}^{*}_{\mu} v(\mu) = \max_{\pi \in \Pi} \mathcal{T}^{\pi}_{\mu} v(\mu) = \max_{\pi \in \Pi} \left\{ [\mathcal{T}^{\pi} v](\mu) \right\}
\end{equation}

As $\mu$-optimality operator now gives a scalar, for any value function $v \in \RSt$, there always exist a policy $\pi_v$ such that $\mathcal{T}^{*}_{\mu} v(\mu) = \mathcal{T}^{\pi_v}_{\mu} v(\mu)$. And the existence of $\epsilon'$-approximate policy in the policy improvement step is automatically proved, and we further have the existence of optimal policy $\pi^* = \pi_t$.

\begin{proposition}
    The $\mu$-consistency operator $\mathcal{T}^{\pi}_{\mu}$ and the optimality operator $\mathcal{T}^{*}$ have the following properties, \begin{enumerate}
        \item Monotonicity: let $v_1, v_2 \in \mathbb{R}^{|\mathcal{S}|}$ such that $v_1(\mu) \ge v_2(\mu)$, then $\mathcal{T}^{\pi}_{\mu} v_1(\mu) \ge \mathcal{T}^{\pi}_{\mu} v_2(\mu)$ and $\mathcal{T}^{*}_{\mu} v_1(\mu) \ge \mathcal{T}^{*}_{\mu} v_2(\mu)$ . 
        \item Transition Invariance: for any $c \in \mathbb{R}$, we have $\mathcal{T}^{\pi}_{\mu} (v + c\textbf{1})(\mu) = \mathcal{T}^{\pi}_{\mu} v(\mu) + \gamma c\textbf{1}(\mu)$ and $\mathcal{T}^{*}_{\mu} (v + c\textbf{1})(\mu) = \mathcal{T}^{*}_{\mu} v(\mu) + \gamma c\textbf{1}(\mu)$
        \item Contraction: The operator $\mathcal{T}^{\pi}_{\mu}$ and $\mathcal{T}^{*}_{\mu}$ are $\gamma$-contractions, whose unique stationary points are $(V^{\pi}_{r} - \lambda_t^{\top}V^{\pi}_{g})(\mu)$ and $(V^{\pi_t}_{r} - \lambda_t^{\top}V^{\pi_t}_{g})(\mu)$ respectively. 
    \end{enumerate} \label{basic_properties_T_mu}
\end{proposition}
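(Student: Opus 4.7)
The plan is to derive all three properties of the $\mu$-level operators directly from the corresponding properties of $\mathcal{T}^{\pi}$ and $\mathcal{T}^{*}$ established in Proposition~\ref{basic_properties_T}, exploiting that $\mathcal{T}^{\pi}_{\mu}v(\mu) = \langle \mathcal{T}^{\pi}v, \mu\rangle$ and $\mathcal{T}^{*}_{\mu}v(\mu) = \max_{\pi \in \Pi}\langle \mathcal{T}^{\pi}v, \mu\rangle$, together with the facts that $\mu \in \Delta(\St)$ is non-negative with $\langle \mathbf{1}, \mu\rangle = 1$. Reading the monotonicity hypothesis as the pointwise inequality $v_1 \ge v_2$ (the condition under which Proposition~\ref{basic_properties_T}(1) can be applied), the monotonicity of $\mathcal{T}^{\pi}_{\mu}$ and $\mathcal{T}^{*}_{\mu}$ would follow by taking the inner product of $\mathcal{T}^{\pi}v_1 \ge \mathcal{T}^{\pi}v_2$ (resp.\ $\mathcal{T}^{*}v_1 \ge \mathcal{T}^{*}v_2$) with the non-negative vector $\mu$. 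Transition invariance follows similarly by substituting $v + c\mathbf{1}$ into the definition, invoking Proposition~\ref{basic_properties_T}(2), and using $\langle \mathbf{1}, \mu\rangle = 1$ to collapse $\langle \gamma c\mathbf{1}, \mu\rangle$ to $\gamma c$.

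For the contraction of $\mathcal{T}^{\pi}_{\mu}$, the plan is to chain
\begin{equation*}
|\mathcal{T}^{\pi}_{\mu}v_1(\mu) - \mathcal{T}^{\pi}_{\mu}v_2(\mu)| = |\langle \mathcal{T}^{\pi}v_1 - \mathcal{T}^{\pi}v_2, \mu\rangle| \le \|\mathcal{T}^{\pi}v_1 - \mathcal{T}^{\pi}v_2\|_{\infty} \le \gamma\|v_1-v_2\|_{\infty}
\end{equation*}
by H\"older's inequality and Proposition~\ref{basic_properties_T}(3). Since $V^{\pi}_{r} - \lambda_t^{\top}V^{\pi}_{g}$ is the unique pointwise fixed point of $\mathcal{T}^{\pi}$, taking the inner product with $\mu$ on both sides of the fixed-point identity immediately gives that $(V^{\pi}_{r} - \lambda_t^{\top}V^{\pi}_{g})(\mu)$ is a fixed point of $\mathcal{T}^{\pi}_{\mu}$; uniqueness at the scalar level is inherited from the contraction bound above combined with the uniqueness at the vector level. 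The contraction for $\mathcal{T}^{*}_{\mu}$ then follows from the standard observation that a pointwise maximum of $\gamma$-Lipschitz scalar functions is $\gamma$-Lipschitz.

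The subtle step, and the main obstacle, will be identifying the fixed point of $\mathcal{T}^{*}_{\mu}$ as $(V^{\pi_t}_{r} - \lambda_t^{\top}V^{\pi_t}_{g})(\mu)$ with $\pi_t$ the optimal Best-response policy. Unlike in the vector case, the maximum in the definition of $\mathcal{T}^{*}_{\mu}$ is taken against a single scalar objective $\langle \mathcal{T}^{\pi} v, \mu\rangle$, so a single optimizing policy exists; this is exactly the motivation for introducing $\mathcal{T}^{\pi}_{\mu}$ in this discussion. To verify the fixed-point property I would set $v^{\star} := V^{\pi_t}_{r} - \lambda_t^{\top}V^{\pi_t}_{g}$ and show that $\max_{\pi}\langle \mathcal{T}^{\pi}v^{\star}, \mu\rangle = \langle v^{\star}, \mu\rangle$: the ``$\ge$'' direction follows by plugging in $\pi = \pi_t$ and invoking Proposition~\ref{basic_properties_T}(3), while the ``$\le$'' direction relies on the definition of $\pi_t$ as a maximizer of $L(\pi, \lambda_t) = V^{\pi}_{r}(\mu) - \lambda_t^{\top}V^{\pi}_{g}(\mu)$ in \eqref{best_response} combined with the fact that any policy $\pi'$ satisfying $\langle \mathcal{T}^{\pi'} v^{\star}, \mu\rangle > \langle v^{\star}, \mu\rangle$ would, by iterating the contraction $\mathcal{T}^{\pi'}_{\mu}$ to its unique fixed point $V^{\pi'}_{r}(\mu) - \lambda_t^{\top}V^{\pi'}_{g}(\mu)$, contradict the optimality of $\pi_t$. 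Uniqueness then follows by Banach's fixed-point theorem applied to the contraction $\mathcal{T}^{*}_{\mu}$.
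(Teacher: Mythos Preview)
Your overall strategy—reducing the $\mu$-level statements to Proposition~\ref{basic_properties_T} via $\mathcal{T}^{\pi}_{\mu}v(\mu) = \langle \mathcal{T}^{\pi}v, \mu\rangle$ and then handling $\mathcal{T}^{*}_{\mu}$ by a sandwich/optimality argument—is exactly the paper's approach. In particular, your identification of the fixed point of $\mathcal{T}^{*}_{\mu}$ as $(V^{\pi_t}_r - \lambda_t^\top V^{\pi_t}_g)(\mu)$ via ``$\ge$'' from plugging in $\pi=\pi_t$ and ``$\le$'' from the Best-response optimality of $\pi_t$ is precisely what the paper does.

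There is, however, one genuine slip in your treatment of the monotonicity of $\mathcal{T}^{*}_{\mu}$. You propose to take the inner product of $\mathcal{T}^{*}v_1 \ge \mathcal{T}^{*}v_2$ with $\mu$, but $\langle \mathcal{T}^{*}v, \mu\rangle = \sum_s \mu(s)\max_\pi [\mathcal{T}^{\pi}v](s)$ is a \emph{sum of state-wise maxima}, whereas by definition $\mathcal{T}^{*}_{\mu}v(\mu) = \max_\pi \langle \mathcal{T}^{\pi}v, \mu\rangle$ is a \emph{maximum of $\mu$-averages}; these do not coincide in general (only $\mathcal{T}^{*}_{\mu}v(\mu) \le \langle \mathcal{T}^{*}v, \mu\rangle$ holds). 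So averaging the vector inequality for $\mathcal{T}^{*}$ does not yield the scalar inequality for $\mathcal{T}^{*}_{\mu}$. The paper's route—and the easy fix—is the argmax swap: letting $\pi_2$ attain the max in $\mathcal{T}^{*}_{\mu}v_2(\mu)$,
\[
\mathcal{T}^{*}_{\mu}v_1(\mu) \;\ge\; \mathcal{T}^{\pi_2}_{\mu}v_1(\mu) \;\ge\; \mathcal{T}^{\pi_2}_{\mu}v_2(\mu) \;=\; \mathcal{T}^{*}_{\mu}v_2(\mu),
\]
the middle inequality being the monotonicity of $\mathcal{T}^{\pi_2}_{\mu}$ you already have. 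The same argmax device is what the paper uses for the contraction of $\mathcal{T}^{*}_{\mu}$, and it makes your ``pointwise maximum of $\gamma$-Lipschitz scalar maps is $\gamma$-Lipschitz'' observation precise without passing through the vector operator $\mathcal{T}^{*}$.
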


\begin{proof}
    By the definition of the $\mu$-consistency operator $\mathcal{T}^{\pi}_{\mu} v(\mu) = [\mathcal{T}^{\pi} v](\mu)$, it is straightforward that the monotonicity, transition invariance, and contraction properties hold for $\mu$-consistency operator. And that the unique stationary point of $\mu$-consistency operator is $(V^{\pi}_{r} - \lambda_t^{\top}V^{\pi}_{g})(\mu)$.

    We then consider $\mu$-optimality operator $\mathcal{T}^{*}_{\mu} v(\mu)$.

    \paragraph{Monotonicity} Let $\pi_1$ satisfies $\mathcal{T}^{*}_{\mu} v_1(\mu) = \mathcal{T}^{\pi_1}_{\mu} v_1(\mu)$ and similar to $\pi_2$, we have \begin{equation}
        \mathcal{T}^{*}_{\mu} v_1(\mu) - \mathcal{T}^{*}_{\mu} v_2(\mu) = \max_{\pi} \mathcal{T}^{\pi}_{\mu} v_1(\mu) - \mathcal{T}^{\pi_2}_{\mu} v_2(\mu) \ge \mathcal{T}^{\pi_2}_{\mu} v_1(\mu) - \mathcal{T}^{\pi_2}_{\mu} v_2(\mu) \ge 0
    \end{equation}

    \paragraph{Transition Invariance} \begin{equation}
        \mathcal{T}^{*}_{\mu} (v_1 + c \mathbf{1})(\mu) = \max_{\pi} \mathcal{T}^{\pi}_{\mu} (v_1 + c \mathbf{1})(\mu) = \max_{\pi} \mathcal{T}^{\pi}_{\mu} v_1(\mu) + \gamma c \mathbf{1}(\mu)
    \end{equation}

    \paragraph{Contraction} We first assume $\mathcal{T}^{*}_{\mu} v_1(\mu) > \mathcal{T}^{*}_{\mu} v_2(\mu)$ without loss of generality, \begin{equation}
        |\mathcal{T}^{*}_{\mu} v_1(\mu) - \mathcal{T}^{*}_{\mu} v_2(\mu)| = \mathcal{T}^{\pi_1}_{\mu} v_1(\mu) - \max_{\pi} \mathcal{T}^{\pi}_{\mu} v_2(\mu) \le \mathcal{T}^{\pi_1}_{\mu} v_1(\mu) - \mathcal{T}^{\pi_1}_{\mu} v_2(\mu) \le \gamma | v_1(\mu) - v_2(\mu) |
    \end{equation}
    Then as $(V^{\pi_t}_{r} - \lambda_t^{\top}V^{\pi_t}_{g})(\mu)$ is the unique stationary point of $\mathcal{T}^{\pi_t}_{\mu}$, we have $(V^{\pi_t}_{r} - \lambda_t^{\top}V^{\pi_t}_{g})(\mu) \le \mathcal{T}^{\pi^*}_{\mu} (V^{\pi_t}_{r} - \lambda_t^{\top}V^{\pi_t}_{g})(\mu) \le (V^{\pi^*}_{r} - \lambda_t^{\top}V^{\pi^*}_{g})(\mu)$ where $\pi^*$ is the stationary policy of $\mathcal{T}^{\pi^*}_{\mu}$.

    At the same time, by the definition of $\pi_t$, we have, \begin{equation}
        (V^{\pi_t}_{r} - \lambda_t^{\top}V^{\pi_t}_{g})(\mu) \ge (V^{\pi^*}_{r} - \lambda_t^{\top}V^{\pi^*}_{g})(\mu) 
    \end{equation} 
    Therefore, $(V^{\pi_t}_{r} - \lambda_t^{\top}V^{\pi_t}_{g})(\mu)$ is the unique stationary point of $\mathcal{T}^{\pi^*}_{\mu}$.
\end{proof}

Now consider the related AMPI scheme, with which the assumption \ref{assum_policy_realizability} only assumes the availability of approximate solvers, \begin{equation}
    \begin{cases}
        \pi^{k + 1} &=~~ \argmax^{\epsilon'_{k + 1, \mu}}_{\pi} \mathcal{T}_{\mu}^{\pi} v^k(\mu)\\
        v^{k + 1} &=~~ (\mathcal{T}^{\pi^{k + 1}})^m v^k + \epsilon_{k + 1}
    \end{cases} \label{ampi_mu}
\end{equation}

Unlike AMPI scheme (\ref{ampi}), the error for policy improvement step $\epsilon'_{\mu} \in \mathbb{R}$ is a scalar that guarantees $\max_{\pi} \mathcal{T}^{\pi}_{\mu} v^k(\mu) \le \mathcal{T}^{\pi^{k + 1}}_{\mu} v^k(\mu) + \epsilon'_{k + 1, \mu}$. We then define the non-negative scalar loss $l_{k, \mu} = v_{\pi_t}(\mu) - v_{\pi^k}(\mu)$ and finally obtain its absolute error bounds.

\begin{theorem}
    \label{mu_br_L_inf_error_bound}
    Under assumption \ref{assum_policy_realizability}, after $k$ iterations of scheme (\ref{ampi_mu}), the losses satisfy \begin{equation}
        |l_{k, \mu}| \le 2 \sum_{i = 1}^{k - 1} \sum_{j = i}^{\infty} \Gamma^{j}|\epsilon_{k - i}(\mu)| + \sum_{i = 0}^{k - 1}\sum_{j = i}^{\infty} \Gamma^{j}|\epsilon'_{k - i, \mu}| + h(k)
    \end{equation} where $h(k) \coloneqq 2 \sum_{j = k}^{\infty} \Gamma^{j}|l_{0, \mu}|$ or $h(k) \coloneqq 2 \sum_{j = k}^{\infty} \Gamma^{j}|b_{0, \mu}|$, with $b_{0, \mu} = (\mathcal{T}^{\pi^{1}}v^{0} - v^{0})(\mu)$ that is related to the choice of the starting point. 
\end{theorem}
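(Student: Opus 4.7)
The plan is to transcribe the proof of Lemma \ref{lemma:inter-theorem} into the scalar $\mu$-evaluated setting of scheme (\ref{ampi_mu}). The only structural difference between (\ref{ampi}) and (\ref{ampi_mu}) lies in the policy-improvement step: here $\pi^{k+1}$ is only $\epsilon'_{k+1,\mu}$-optimal at the scalar $\mu$, meaning $\mathcal{T}^{\pi}_\mu v^k(\mu) \le \mathcal{T}^{\pi^{k+1}}_\mu v^k(\mu) + \epsilon'_{k+1,\mu}$ for every policy $\pi$, whereas the policy-evaluation step still produces a vector error $\epsilon_k \in \RSt$. With the same auxiliary quantities as in the original proof,
\begin{equation*}
b_k \coloneqq v^k - \mathcal{T}^{\pi^{k+1}} v^k, \qquad s_k \coloneqq (\mathcal{T}^{\pi^k})^m v^{k-1} - v^{\pi^k}, \qquad d_k \coloneqq v^{\pi_t} - (\mathcal{T}^{\pi^k})^m v^{k-1},
\end{equation*}
we have $l_{k,\mu} = (s_k + d_k)(\mu)$, so the goal reduces to re-deriving the three recursions of Proposition \ref{tech_inequality} and then unrolling them exactly as in Proposition \ref{tech_l1}.

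First, I would carry through the algebraic manipulations of Lemma \ref{lemma:inter-theorem} step by step. The key affine identity $(\mathcal{T}^{\pi})^m (v + w) = (\mathcal{T}^{\pi})^m v + (\gamma P^o_{\pi})^m w$, inherited directly from the structure of (\ref{general_operator}), is unaffected by the change of scheme, so the relation $s_k = (\gamma P^o_{\pi^k})^m (I - \gamma P^o_{\pi^k})^{-1} b_{k-1}$ continues to hold pointwise. The derivations for $b_k$ and $d_{k+1}$ are likewise identical up to the steps that invoke $\mathcal{T}^{\pi'} v^k - \mathcal{T}^{\pi^{k+1}} v^k \le \epsilon'_{k+1}\mathbf{1}$ for $\pi' \in \{\pi^k,\pi_t\}$; those steps must now be performed only after projecting onto $\mu$, using the scalar inequality $[\mathcal{T}^{\pi'} v^k - \mathcal{T}^{\pi^{k+1}} v^k](\mu) \le \epsilon'_{k+1,\mu}$. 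This yields scalar recursions for $b_k(\mu)$ and $d_k(\mu)$ that replace the pointwise $\epsilon'_{k+1}$ of the vector setting by the scalar $\epsilon'_{k+1,\mu}$.

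Second, I would unroll the resulting recursions and evaluate at $\mu$ exactly as in Proposition \ref{tech_l1}. Because each $\gamma P^o_{\pi}$ is sub-stochastic and $\mu$ is a probability measure, the pushforwards $\mu \mapsto ((\gamma P^o_{\pi})^j)^\top \mu$ remain non-negative sub-probability measures of total mass $\gamma^j$; evaluating a vector bound $y \le \Gamma^j z$ at $\mu$ therefore retains the same $\Gamma^j$ discounting structure, and when the right-hand side is a constant multiple of $\mathbf{1}$ (as happens with the scalar $\epsilon'_{k-i,\mu}$ terms) the operator $\Gamma^j$ collapses to the factor $\gamma^j$. Combining the bounds on $s_k(\mu)$ and $d_k(\mu)$ recovers the claimed inequality, with $h(k)$ coming from either $l_{0,\mu}$ or $b_{0,\mu} = (v^0 - \mathcal{T}^{\pi^1} v^0)(\mu)$ via the $\mu$-evaluated version of the starting-point identity $b_0 \le (I - \gamma P^o_{\pi_t})(-d_0) + \epsilon'_1 \mathbf{1}$.

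The main obstacle is bookkeeping around the loss of the pointwise improvement guarantee: in the vector proof, inequalities propagate cleanly through the transition operators $\gamma P^o_{\pi^k}$, but in the $\mu$-scheme each use of $\epsilon'_{k+1,\mu}$ must be placed only after the relevant vector has been tested against $\mu$ (or a $\mu$-pushforward), since $\mathcal{T}^{\pi'} v^k - \mathcal{T}^{\pi^{k+1}} v^k \le \epsilon'_{k+1,\mu}$ is not available pointwise. Ensuring that every such projection is applied consistently at the right stage is the delicate part; once this is handled, the remainder of the argument is a direct transcription of the proof of Lemma \ref{lemma:inter-theorem}.
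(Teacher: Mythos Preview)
Your proposal is correct and follows essentially the same route as the paper: define the $\mu$-evaluated scalars $b_{k,\mu}, s_{k,\mu}, d_{k,\mu}$, rederive the three recursions of Lemma~\ref{lemma:inter-theorem} with the scalar improvement error $\epsilon'_{k+1,\mu}$ replacing the vector $\epsilon'_{k+1}$, and then unroll exactly as in Proposition~\ref{tech_l1}. The paper's own proof is considerably terser and does not spell out the pushforward bookkeeping you flag as the delicate point, so your write-up is in fact more explicit than the original on the one nontrivial difference between the two schemes.
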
 

\begin{proof}
    Given the relation of $\mathcal{T}^{\pi}_{\mu} v(\mu) = [\mathcal{T}^{\pi} v](\mu)$ and the properties in proposition \ref{basic_properties_T_mu}, one can define $b_{k, \mu} = b_{k}(\mu)$ and similar for $s_{k, \mu}$ and $d_{k, \mu}$. Similar to proof for theorem \ref{br_L_inf_error_bound}, we have \begin{align}
        b_{k, \mu} &\le \sum_{i = 1}^k \Gamma^{m(k - i)} x_i(\mu) + \Gamma^{mk} b_{0, \mu}\\
        d_{k, \mu} &\le \sum_{j = 1}^{k - 1} \Gamma^{k - 1 - j} (y_j(\mu) + \sum_{l = 1}^{m - 1} \Gamma^{l} b_{j, \mu}) + \Gamma^{k}d_{0, \mu}\\
        s_{k, \mu} &= \Gamma^{m} \sum_{j = 0}^{\infty} \Gamma^j b_{k - 1, \mu}
    \end{align}and the relation between $b_{0, \mu}$ and $d_{0, \mu}$, \begin{equation}
        b_{0, \mu} \le (I - \gamma P^o_{\pi_t})(-d_{0, \mu}) + \epsilon'_{1, \mu}
    \end{equation}
    Then follow with the proof in proposition \ref{tech_l1}, we complete by noticing $l_{k, \mu}$ is non-negative given the definition of\BestResponse $\pi_t$.
\end{proof} 


\subsection{An Instantiation of the Solution with Contamination Set}
\label{appendix:2}

In this section, we give a specific instantiation for our solution to distributionally robust constrained RL (DRC-RL) with R-contamination uncertainty sets. In short, as our solution simplified to the non-robust counterpart of DRC-RL with asmaller discount factor, we use the same subroutines as in \citet{le2019batch} and get the theoretical results with generalization needs.

In specific,\BestResponse~is instatiated with fitted Q iteration (FQI) \citep{ernst2005tree},\OnlineAlgo~is chosen as exponentiated gradient (EG) \citep{kivinen1997exponentiated}. The EG algorithm requires bounded $\lambda$, we thus force $\Vert \lambda \Vert_1 \le B$. When evaluating Lagrangian, e.g. $L_{\max}$ and $L_{\min}$ in meta algorithm \ref{Meta_Algo}, fitted Q evaluation (FQE) \citep{le2019batch} is used. We represent the resulting instantiation in algorithm \ref{Algo_contamination}, which has its true performance that could be arbitrarily close to the optimal policy for DRC-RL.

\begin{algorithm}[t] 
\caption{DRC-RL with Contamination Set}
\label{Algo_contamination}
\textbf{Input} : Dataset $\mathcal{D} = \{x_i, a_i, x'_i, r_i, g_i\}_{i=1}^n$. Online algorithm parameters: $\ell_1$ norm bound $B$, learning rate $\eta$. Robust level $\beta$. Initial discounted horizon $\gamma_0.$
\begin{algorithmic}[1]
\State Initialize $\lambda_1 = (\frac{B}{m+1}, \dots, \frac{B}{m+1}) \in \mathbb{R}^{m+1}$
\State Initialize $\gamma = (1 - \beta) \gamma_0$
\For{each round $t$ do}
    \State Learn $\pi_t \gets \mathrm{FQI}(r - \lambda_t^{\top} g)$ 
    \State Evaluate $\hat{R}(\pi_t) \gets \mathrm{FQE}(\pi_t, r), \hat{G}(\pi_t)\gets \mathrm{FQE}(\pi_t, g)$
    \State $\hat{\pi}_t \gets \frac{1}{t} \sum_{t'=1}^t \pi_{t'}, \hat{\lambda}_t \gets \frac{1}{t} \sum_{t'=1}^t \lambda_{t'}$
    \State $\hat{R}(\hat{\pi}_t) \gets \frac{1}{t} \sum_{t'=1}^t \hat{R}(\pi_{t'})$
    \State $\hat{G}(\hat{\pi}_t) \gets \frac{1}{t} \sum_{t'=1}^t \hat{G}(\pi_{t'})$
    \State Learn $\tilde{\pi} \gets \mathrm{FQI}(r - \hat{\lambda}_t^{\top} g)$ 
    \State Evaluate $\hat{R}(\tilde{\pi})\gets \mathrm{FQE}(\tilde{\pi}, r), \hat{G}(\tilde{\pi})\mathrm{FQE}(\tilde{\pi}, g)$
    \State $\hat{L}_{\max} = \hat{R}(\tilde{\pi}) - \hat{\lambda}_t^{\top} \left[(\hat{G}(\tilde{\pi}) - \tau)^{\top}, 0\right]^{\top}$
    \State $\hat{L}_{\min} = \underset{\lambda, \Vert \lambda \Vert_1 = B}{\min} \left(\hat{R}(\hat{\pi}_t) - \hat{\lambda}_t^{\top} [(\hat{G}(\hat{\pi}_t) - \tau)^{\top}, 0]^{\top}\right)$
    \If{$\hat{L}_{max} - \hat{L}_{min} \leq \omega$}
        \State \textbf{return} $\hat{\pi}_t$
    \EndIf
    \State Set $z_t \gets \left[ (\tau - \hat{G}(\pi_t))^{\top}, 0 \right]^{\top} \in \mathbb{R}^{m+1}$
    \State $\lambda_{t+1}[i] \gets B \frac{\lambda_t[i]e^{-\eta z_t[i]}}{\sum_j \lambda_t[j]e^{-\eta z_t[j]}} \quad \forall i$
\EndFor
\end{algorithmic}
\end{algorithm}

\begin{proposition}[Theorem 4.4, \citet{le2019batch}]
    \label{end_to_end}
    Let $\pi^*$ be the optimal policy to (\ref{conservative_form}). Denote $\bar{V} = \bar{R} + B\bar{G}$. Let $K$ be the number of iterations of FQE and FQI, $\hat{\pi}$ be the policy returned by algorithm \ref{Algo_contamination}, with termination threshold $\omega$ and robust level $\beta$. For $\epsilon > 0$ and $\delta \in (0, 1)$, when $n = \mathcal{O}(\frac{\bar{V}^4}{\epsilon^2}(\log \frac{K(m + 1)}{\delta} + \dim_{F}\log \frac{\bar{V}^2}{\epsilon^2} + \log \dim_{F}))$, we have with probability at least $1 - \delta$: \begin{equation}
        V^{\hat{\pi}}_r(\mu) \ge V^{\pi^*}_r(\mu) - \omega - \frac{(4 + B)(1 - \beta)\gamma}{(1 - (1 - \beta)\gamma)^3}(\sqrt{\beta_{\mu}}\epsilon + 2((1 - \beta)\gamma)^{K/2}\bar{V}), 
    \end{equation} and \begin{equation}
        V^{\hat{\pi}}_g(\mu) \ge \tau - 2\frac{\bar{V} + \omega}{B} - \frac{((1 - \beta)\gamma)^{1/2}}{(1 - (1 - \beta)\gamma)^{3/2}}(\sqrt{\beta_{\mu}}\epsilon + \frac{2((1 - \beta)\gamma)^{K/2}\bar{V}}{(1 - (1 - \beta)\gamma)^{1/2}})
    \end{equation}
    where $dim_{F}$ is the pseudo-dimension \citep{hastie2009elements} for function approximation class F used in FQI and  FQE , $\beta_{\mu}$ is the concentration coefficient of future state-action distribution \citep{munos2007performance, le2019batch}.
\end{proposition}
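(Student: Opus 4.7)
The plan is to obtain this performance/sample-complexity guarantee as essentially a direct corollary of the end-to-end constrained-RL bound of \citet{le2019batch} (their Theorem 4.4), by exploiting the reduction established in Section \ref{sec:contamination}. Under the R-contamination uncertainty set together with the fail-state assumption, the robust consistency operator simplifies to the standard Bellman operator (\ref{contamination_operator_simplified}) with discount $\gamma' \coloneqq (1-\beta)\gamma$, so each robust value computation in Algorithm \ref{Algo_contamination} coincides with a standard (non-robust) value computation in the nominal MDP at the shortened discount factor $\gamma'$. Consequently Algorithm \ref{Algo_contamination} is exactly the Le--Voloshin--Yue constrained-RL algorithm instantiated on the nominal MDP with discount $\gamma'$, and my proof would trace their argument verbatim with the single substitution $\gamma \mapsto \gamma'$.

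With this reduction in hand, the argument decomposes into three ingredients. First, a single-call FQI/FQE generalization bound: by the standard pseudo-dimension analysis, one regression call with $n$ samples from $\mu$ achieves $L_2(\mu)$ error at most $\epsilon$ with confidence $1-\delta'$ provided $n = \Omega(\bar V^4 \epsilon^{-2}(\log(1/\delta') + \dim_F \log(\bar V^2/\epsilon^2) + \log \dim_F))$; since Algorithm \ref{Algo_contamination} makes $O(K(m+1))$ such calls, taking $\delta' = \delta/[K(m+1)]$ and applying a union bound yields the stated sample complexity together with the $\log[K(m+1)/\delta]$ term inside the big-$O$. Second, an ADP propagation bound: per-step $L_2(\mu)$ regression errors propagate into weighted $L_1$ policy-value errors with multiplicative factor $\sqrt{\beta_\mu}$ (from the concentrability coefficient) and a geometric residual $(\gamma')^{K/2}\bar V$ (from finite iteration truncation), with horizon factor $\gamma'/(1-\gamma')^2$ for FQI/policy-optimization and $(\gamma')^{1/2}/(1-\gamma')^{3/2}$ for FQE/policy-evaluation. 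Third, a primal-dual conversion: the termination condition $\hat L_{\max}-\hat L_{\min}\le \omega$, combined with the FQE error control, bounds the true duality gap between $\hat\pi$ and $(\pi^\ast,\hat\lambda)$, which in turn converts into a reward suboptimality bound via weak duality and into a constraint-violation bound by a contradiction argument that exploits the bounded $\ell_1$-ball $\|\lambda\|_1 \le B$ used by exponentiated gradient.

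To assemble the reward inequality I would chain $V^{\hat\pi}_r(\mu) \ge L(\hat\pi, \hat\lambda) \ge L(\pi^\ast, \hat\lambda) - \omega - \xi \ge V^{\pi^\ast}_r(\mu) - \omega - \xi$, where $\xi$ absorbs the total FQI/FQE approximation in evaluating the Lagrangian; substituting the propagation bound on $\xi$ with coefficient $4+B$ (accounting for the FQE error appearing once for reward and scaled by $B$ for the constraints, plus a constant overhead from both sides of the gap) recovers the claimed bound. For the constraint inequality I would argue by contradiction: if $V^{\hat\pi}_{g_i}(\mu) < \tau_i - \Delta$ for some $\Delta$ exceeding $2(\bar V+\omega)/B$ plus the propagation term, then redirecting $\hat\lambda$ to load mass $B$ on coordinate $i$ would increase $L(\hat\pi,\cdot)$ by more than $\omega + \xi$, contradicting the termination criterion; rearranging yields the stated constraint inequality.

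The main obstacle is the careful bookkeeping of constants and horizon exponents: in particular, verifying that the cubic factor $(1-\gamma')^3$ in the reward-bound denominator and the exponent $3/2$ in the constraint-bound denominator emerge correctly when composing the ADP propagation bound (already carrying $(1-\gamma')^2$ from iterating a $\gamma'$-contraction) with the additional $(1-\gamma')^{-1}$ factor incurred when converting per-call policy-value error into Lagrangian error, and tracking how the multiplier $B$ enters each channel without losing the $(4+B)$ scaling in the reward bound or the $2/B$ scaling in the constraint bound. Once the $\gamma \to (1-\beta)\gamma$ substitution is applied, the remainder of the proof is a mechanical re-instantiation of the Le--Voloshin--Yue analysis, so no new technique beyond the reduction itself is required.
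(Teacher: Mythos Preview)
Your proposal is correct and takes essentially the same approach as the paper: the paper's own proof is a one-liner stating that the argument is standard from \citet{le2019batch} up to the discount-factor substitution $\gamma \mapsto (1-\beta)\gamma$, which is precisely the reduction you identify and then unpack in detail. Your additional exposition of the three ingredients (FQI/FQE generalization, ADP propagation, primal-dual conversion) simply spells out what the Le--Voloshin--Yue analysis contains, so there is no methodological divergence.
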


\begin{proof}
    The proof is standard as in \citet{le2019batch} up to a different discount factor.
\end{proof}



\section{Experiments}

\label{appendix:3}

In this section, we illustrate the detailed setting of our verification for a solution with R-contamination uncertainty set as in Section \ref{sec:contamination}. Notably, the resulting operator (\ref{contamination_operator_simplified}) for\BestResponse~subroutine coincides with the standard consistency operator, which is used for\BestResponse~subroutine in Constrained RL \citep{le2019batch}, up to a discount factor difference. We thus focus on the joint constrained-robustness comparison as our target, and refer to constrained RL literature \citep{miryoosefi2019reinforcement, le2019batch} for broader comparisons.

\paragraph{Algorithm Design.} We adopt the same choices of\BestResponse~and\OnlineAlgo~with a shortened discount factor in subroutines as in \citet{le2019batch}. In specific, the\BestResponse~algorithm is instantiated with Fitted Q Iteration (FQI) \citep{ernst2005tree}, a model-free off-policy learning approach on the well-collected dataset from \citet{le2019batch} with modifications on constraints. And we use multi-layered CNNs. The\OnlineAlgo~is chosen as Exponentiated Gradient (EG) algorithm \citep{kivinen1997exponentiated}, a variant of online mirror descent. The EG algorithm requires bounded $\lambda$, we thus force $\Vert \lambda \Vert_1 \le B$ which has minor harm on the theoretical analysis as shown in proposition \ref{end_to_end}. To meet the requirement of proposition \ref{meta_prop}, we use the simulator to precisely evaluate value functions of specific policies and Lagrangians, as shown in Algorithm \ref{Algo_contamination}. We set the initial position of our car as fixed, i.e. $\mu = \{s_0\}$, for the accuracy of all the evaluations.

\subsection{Car Racing}

\begin{figure}[h]
    \centering
    \includegraphics[width = .4\textwidth]{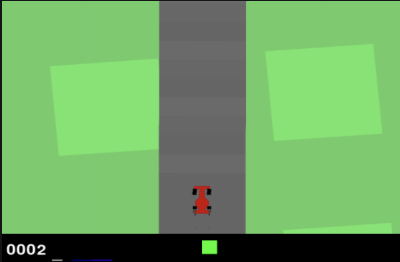}
    \caption{Car Racing environment}
    \label{fig:carracing}
\end{figure}

The environment is chosen as the car racing environment, a high-dimensional domain from Gymnasium \citep{towers_gymnasium_2023}, as shown in figure \ref{fig:carracing}. 
This environment is a racetrack, where each state $s \in \St$ is a $96 \times 96 \times 3$ tensor of raw pixels.
Given each state, the agent has 12 actions $a \in \Ac = \{(i, j, k)|i \in \{-1, 0, 1\}, j \in \{0, 1\}, k \in \{0, .2\}\}$, corresponding to steering angle, amount of gas applied and amount of brake applied, respectively.
In each episode, the goal is to traverse over $95\%$ of the track, measured by the number of tiles which amount to 281 tiles in total. The agent receives a reward of $\frac{1000}{281}$ for passing each single tile and no reward if off-track. A small positive cost of $.1$ applies at every time step with a maximum horizon of $1000$. We further utilize the popular frame-stacking option that is common in practice in online RL for Atari and video games. 

We describe the two constraints we studied as slow driving and edge driving. In slow driving constraint, the agent receives a reward $g_0(s, a) = 1$ if $a$ contains braking action and $0$ otherwise. In edge driving, the agent receives its normalized Euclidean distance of the closest point between the track and the agent from the lane center as its reward $g_1(s, a) \in [0, 10]$. Both constraints may intervene with the primary goal of track traversing. Let $N_t$ be the number of tiles that are traversed by the agent, then the distributionally robust constrained reinforcement learning problem is :\begin{align}
    \max_{\pi \in \Pi} &~\min_{\mathcal{K} \in \otimes_{t \ge 0} \mathcal{P}} \mathbb{E}_{\mathcal{K}} [\sum_{t = 0}^{\infty} \gamma^t (\frac{1000}{281}(N_{t + 1} - N_{t}) - .1) | s_0 = s, \pi]\\
    \mathrm{s.t.}&~ \min_{\mathcal{K} \in \otimes_{t \ge 0} \mathcal{P}} \mathbb{E}_{\mathcal{K}} [\sum_{t = 0}^{\infty} \gamma^t \mathbb{I}(a_t \in \mathcal{A}_{braking}) | s_0 = s, \pi] \ge \tau_0 \notag \\
    &~ \min_{\mathcal{K} \in \otimes_{t \ge 0} \mathcal{P}} \mathbb{E}_{\mathcal{K}} [\sum_{t = 0}^{\infty} \gamma^t \frac{10d(s_t)}{d_{max}} | s_0 = s, \pi] \ge \tau_1. \notag
\end{align}

We finally set the thresholds $\tau = [\tau_0, \tau_1] = [2, 4]$ being close to the constraints satisfaction of baseline ($\gamma = 0.95$), such that the slow driving constraint is satisfied and the edge driving constraint is violated. 

\subsection{Full Results}

In figure \ref{fig:all}, the full results of our car racing experiments are presented.

\begin{figure}[!htb]
    \centering
    \subfigure[Shift of Braking Magnitude]{
        \includegraphics[width = \linewidth]{./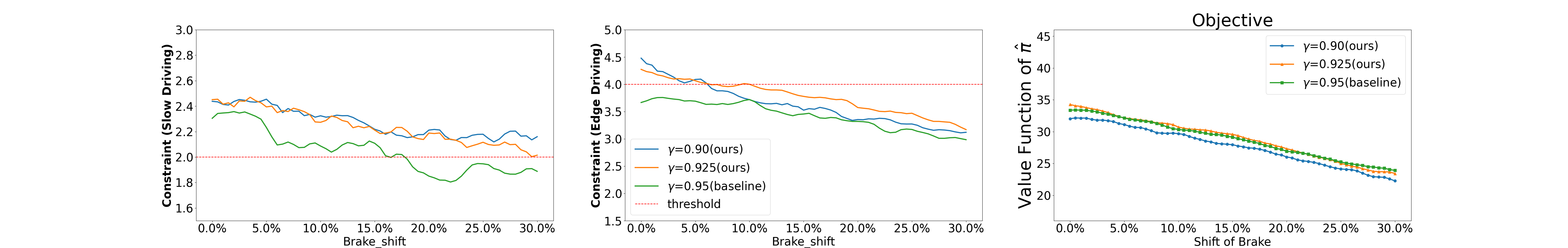}
    }
    \subfigure[Shift of Steering Angle]{
        \includegraphics[width = \linewidth]{./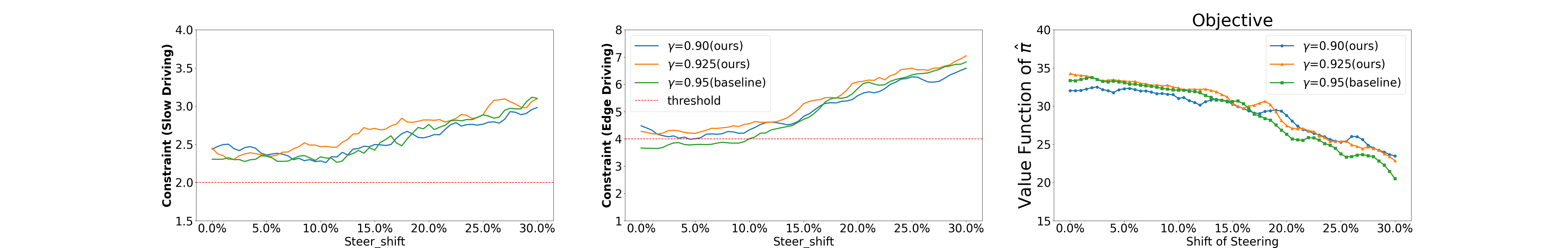}
    }
    \subfigure[Shift of Powew]{
        \includegraphics[width = \linewidth]{./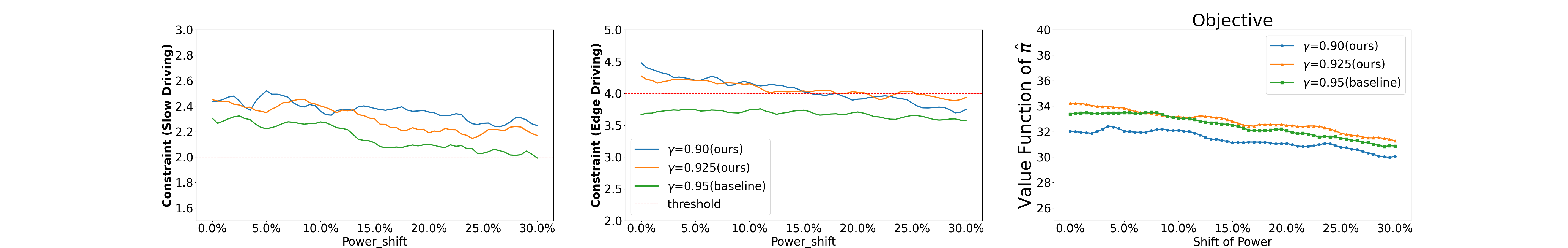}
    }
    \subfigure[Shift of Inertia of Wheel]{
        \includegraphics[width = \linewidth]{./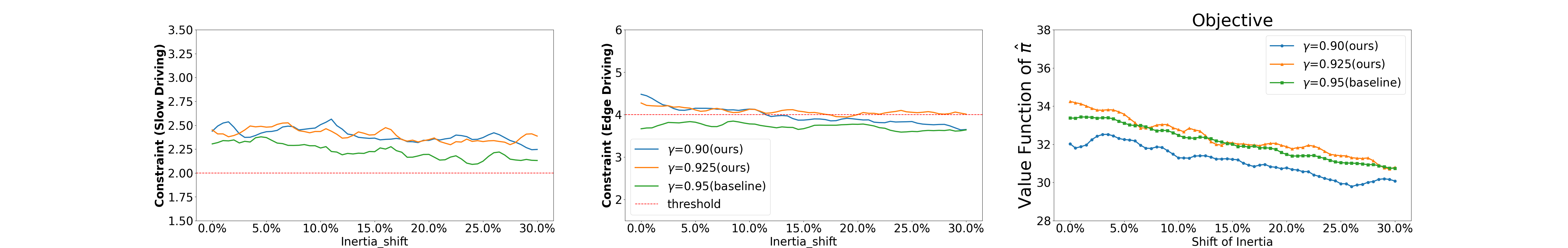}
    }
    \vfill
    \label{fig:all}
    \caption{Full results with Four different shifts: Higher is better, Left two are constraints and the right one is the objective. The bar graphs of constraints satisfaction are directly produced from these results. }
\end{figure}

\end{document}